\newtheorem{theorem}{Theorem}%[section]
\newtheorem{lemma}{Lemma}
\newtheorem{corollary}{Corollary}
\newcommand{\R}{\mathbb{R}}
\newcommand{\E}{\mathbb{E}}
\newcommand{\I}{\mathbf{1}}
\DeclareMathOperator*{\argmax}{arg\,max}
\DeclareMathOperator*{\argmin}{arg\,min}
\DeclareMathOperator*{\arginf}{arg\,inf}
\def\sA{{\mathsf A}}
\def\sW{{\mathsf W}}
\def\sX{{\mathsf X}}
\def\sY{{\mathsf Y}}
\def\sZ{{\mathsf Z}}
\def\PP{{\mathbb P}}
\def\deq{\triangleq}
\def\eps{\varepsilon}
\title{Information-theoretic analysis of generalization capability of learning algorithms}
\author{
Aolin Xu \qquad Maxim Raginsky \\
	\texttt{\{aolinxu2,maxim\}@illinois.edu}	
	\thanks{Department of Electrical and Computer Engineering and Coordinated Science Laboratory, University of Illinois, Urbana, IL
61801, USA.
This work was supported in part by the NSF CAREER
award CCF-1254041 and in part by the Center for Science of Information (CSoI), an
NSF Science and Technology Center, under grant agreement CCF-0939370.}
}
\begin{document}

\maketitle

\begin{abstract}
We derive upper bounds on the generalization error of a learning algorithm in terms of the mutual information between its input and output.
The bounds provide an information-theoretic understanding of generalization in learning problems, and give theoretical guidelines for striking the right balance between data fit and generalization by controlling the input-output mutual information. 
We propose a number of methods for this purpose, among which are algorithms that regularize the ERM algorithm with relative entropy or with random noise.
Our work extends and leads to nontrivial improvements on the recent results of Russo and Zou.
\end{abstract}

\section{Introduction}\label{sec:intro}
A learning algorithm can be viewed as a randomized mapping, or a channel in the information-theoretic language, which takes a training dataset as input and generates a hypothesis as output. 
The generalization error is the difference between the population risk of the output hypothesis and its empirical risk on the training data. It measures how much the learned hypothesis suffers from overfitting.
%A hypothesis will have a small population risk if both its empirical risk and generalization error are small, meaning that it both can fit the training data and is able to generalize.
The traditional way of analyzing the generalization error relies either on certain complexity measures of the hypothesis space, e.g. the VC dimension and the Rademacher complexity \cite{BBL05}, or on certain properties of the learning algorithm, e.g., uniform stability \cite{BouEli_stab_gen02}.
Recently, motivated by improving the accuracy of adaptive data analysis, Russo and Zou \cite{RusZou16} showed that the mutual information between the collection of empirical risks of the available hypotheses and the final output of the algorithm can be used effectively  to analyze and control the bias in data analysis, which is equivalent to the generalization error in learning problems.
Compared to the methods of analysis based on differential privacy, e.g., by Dwork et al.~\cite{DFH14_dp,Dwork_adp_holdout} and Bassily et al.~\cite{AlStDp}, the method proposed in \cite{RusZou16} is simpler and can handle unbounded loss functions; moreover, it provides elegant information-theoretic insights into improving the generalization capability of learning algorithms.
In a similar information-theoretic spirit, Alabdulmohsin \cite{Alab_unigen15,Alab_unigen17} proposed to bound the generalization error in learning problems using the total-variation information between a random instance in the dataset and the output hypothesis, but the analysis apply only to bounded loss functions.

In this paper, we follow the information-theoretic framework proposed by Russo and Zou \cite{RusZou16} to derive upper bounds on the generalization error of learning algorithms.
%Our results complement and extend the results in \cite{RusZou16}.
We extend the results in \cite{RusZou16} to the situation where the hypothesis space is uncountably infinite, and provide improved upper bounds on the expected absolute generalization error. We also obtain concentration inequalities for the generalization error, which were not given in \cite{RusZou16}.
While the main quantity examined in \cite{RusZou16} is the mutual information between the collection of empirical risks of the hypotheses and the output of the algorithm, we mainly focus on relating the generalization error to the mutual information between the input dataset and the output of the algorithm, which formalizes the intuition that the less information a learning algorithm can extract from the input dataset, the less it will overfit. This viewpoint provides theoretical guidelines for striking the right balance between data fit and generalization by controlling the algorithm's input-output mutual information. 
For example, we show that regularizing the empirical risk minimization (ERM) algorithm with the input-output mutual information leads to the well-known Gibbs algorithm.
As another example, regularizing the ERM algorithm with random noise can also control the input-output mutual information.
For both the Gibbs algorithm and the noisy ERM algorithm, we also discuss how to calibrate the regularization in order to incorporate any prior knowledge of the population risks of the hypotheses into algorithm design.
Additionally, we discuss adaptive composition of learning algorithms, and show that the generalization capability of the overall algorithm can be analyzed by examining the input-output mutual information of the constituent algorithms.

Another advantage of relating the generalization error to the input-output mutual information is that the latter quantity depends on all ingredients of the learning problem, including the distribution of the dataset, the hypothesis space, the learning algorithm itself, and potentially the loss function, in contrast to the VC dimension or the uniform stability, which only depend on the hypothesis space or on the learning algorithm. As the generalization error can strongly depend on the input dataset \cite{Zhang_gen17}, the input-output mutual information can be more tightly coupled to the generalization error than the traditional generalization-guaranteeing quantities of interest. We hope that our work can provide some information-theoretic understanding of generalization in modern learning problems, which may not be sufficiently addressed by the traditional analysis tools \cite{Zhang_gen17}.
%When applied back to adaptive data analysis, our work leads to nontrivial improvements on the results in \cite{RusZou16}, and provides a high-probability bound complementing the results in \cite{DFH14_dp,Dwork_adp_holdout,AlStDp}.

For the rest of this section, we define the quantities that will be used in the paper.
%\section{The fitting-generalization trade-off}\label{sec:emp_gen_trdf}
In the standard framework of statistical learning theory \cite{ShBe_book14}, there is an {instance space} $\sZ$, a {hypothesis space} $\sW$, and a nonnegative loss function $\ell:\sW\times\sZ \to \R^+$.
A learning algorithm characterized by a Markov kernel $P_{W|S}$ takes as input a {dataset} of size $n$, i.e., an $n$-tuple 
\begin{align}
S=(Z_1,\ldots,Z_n)
\end{align}
of i.i.d.\ random elements of $\sZ$ with some unknown distribution $\mu$, and picks a random element $W$ of $\sW$ as the output hypothesis according to $P_{W|S}$.
%The output hypothesis $W$ is random because of two reasons: the randomness of the dataset, and the private randomness utilized by the learning algorithm.
The {population risk} of a hypothesis $w\in\sW$ on $\mu$ is
\begin{align}\label{eq:true_risk_def}
L_\mu(w) \deq \E[\ell(w,Z)] = \int_{\sZ}\ell(w,z)\mu({\rm d}z) .
\end{align}
The goal of learning is to ensure that the population risk of the output hypothesis $W$ is small, either in expectation or with high probability, under any data generating distribution $\mu$.
%Had the learning algorithm known the distribution $\mu$, it could pick a hypothesis from $\sW$ that minimizes $L_\mu(w)$ and attain the minimum population risk over $\sW$, which is $\inf_{w\in\sW} L_\mu(w)$.
The {excess risk} of $W$ is the difference
$L_\mu(W) - \inf_{w\in\sW} L_\mu(w)$, and its expected value is denoted as $R_{\rm excess}(\mu,P_{W|S})$.
%\begin{align}
%R_{\rm excess}(\mu,P_{W|S}) \deq \E[L_\mu(W)] - \inf_{w\in\sW} L_\mu(w) .
%\end{align}
%where the expectation is taken over the marginal distribution of $W$, stemming from the joint distribution $P_{S,W} = \mu^{\otimes n} \otimes P_{W|S}$.
Since $\mu$ is unknown, the learning algorithm cannot directly compute $L_\mu(w)$ for any $w\in\sW$, but can instead compute the {empirical risk} of $w$ on the dataset $S$ as a proxy, defined as
\begin{align}\label{eq:emp_risk_def}
L_S(w) \deq \frac{1}{n} \sum_{i=1}^n \ell(w,Z_i) .
\end{align}
For a learning algorithm characterized by $P_{W|S}$, the {generalization error} on $\mu$ is the difference $L_\mu(W) - L_S(W)$, and its expected value is denoted as
\begin{align}
{\rm gen}(\mu,P_{W|S}) \deq \E[L_\mu(W) - L_S(W)] ,
\end{align}
where the expectation is taken with respect to the joint distribution $P_{S,W} = \mu^{\otimes n} \otimes P_{W|S}$.
The expected population risk can then be decomposed as
\begin{align}
\E[L_\mu(W)] 
&= \E[L_S(W)] + {\rm gen}(\mu,P_{W|S}) \label{eq:true=emp+gen}, 
\end{align}
where the first term reflects how well the output hypothesis fits the dataset, while the second term reflects how well the output hypothesis generalizes. 
To minimize $\E[L_\mu(W)]$ we need both terms in \eqref{eq:true=emp+gen} to be small.
However, it is generally impossible to minimize the two terms simultaneously, and
%Consider two extremal cases: if $\sW$ contains a hypothesis that perfectly fits the dataset, then choosing this hypothesis will result in zero empirical risk, but at the same time may lead to severe overfitting, such that the hypothesis would fail on fresh instances and result in large generalization error; on the other extreme, by ignoring the dataset, the learning algorithm can output a hypothesis with zero expected generalization error (which will be shown in Sec.~\ref{sec:ub_gen_mi_n}), but clearly this would lead to large empirical risk.
any learning algorithm faces a trade-off between the empirical risk and the generalization error.
In what follows, we will show how the generalization error can be related to the mutual information between the input and output of the learning algorithm, and how we can use these relationships to guide the algorithm design to reduce the population risk by balancing fitting and generalization.

\section{Algorithmic stability in input-output mutual information}

As discussed above, having a small generalization error is crucial for a learning algorithm to produce an output hypothesis with a small population risk.
It turns out that the generalization error of a learning algorithm can be determined by its stability properties.
Traditionally, a learning algorithm is said to be stable if a small change of the input to the algorithm does not change the output of the algorithm much. 
Examples include uniform stability defined by Bousquet and Elisseeff \cite{BouEli_stab_gen02} and on-average stability defined by Shalev-Shwartz et al.~\cite{Learn_stability2010}.
In recent years, information-theoretic stability notions, such as those measured by differential privacy \cite{Dwork_adp_holdout}, KL divergence \cite{AlStDp,WLF_avgKL16}, total-variation information \cite{Alab_unigen15}, and erasure mutual information \cite{stability_ITW16}, have been proposed.
All existing notions of stability show that the generalization capability of a learning algorithm hinges on how sensitive the output of the algorithm is to local modifications of the input dataset.
It implies that the less dependent the output hypothesis $W$ is on the input dataset $S$, the better the learning algorithm generalizes.
From an information-theoretic point of view, the dependence between $S$ and $W$ can be naturally measured by the mutual information between them, which prompts the following information-theoretic definition of stability.
We say that a learning algorithm is {$(\eps,\mu)$-stable in input-output mutual information} if, under the data-generating distribution $\mu$,
\begin{align}\label{eq:mu_mi_stab_def}
I(S;W) \le \eps .
\end{align}
Further, we say that a learning algorithm is {$\eps$-stable in input-output mutual information} if
\begin{align}\label{eq:mi_stab_def}
\sup_{\mu} I(S;W) \le \eps .
\end{align}
According to the definitions in \eqref{eq:mu_mi_stab_def} and \eqref{eq:mi_stab_def}, the less information the output of a learning algorithm can provide about its input dataset, the more stable it is. 
Interestingly, if we view the learning algorithm $P_{W|S}$ as a channel from $\sZ^{ n}$ to $\sW$, the quantity $\sup_{\mu} I(S;W)$ can be viewed as the information capacity of the channel, under the constraint that the input distribution is of a product form. The definition in \eqref{eq:mi_stab_def} means that a learning algorithm is more stable if its information capacity is smaller.
The advantage of the weaker definition in \eqref{eq:mu_mi_stab_def} is that $I(S;W)$ depends on both the algorithm and the distribution of the dataset. Therefore, it can be more tightly coupled with the generalization error, which itself depends on the dataset.
We mainly focus on studying the consequence of this notion of $(\eps,\mu)$-stability in input-output mutual information for the rest of this paper.

\section{Upper-bounding generalization error via $I(S;W)$}\label{sec:ub_gen_mi_general}
In this section, we derive various generalization guarantees for learning algorithms that are stable in input-output mutual information.

\subsection{A decoupling estimate}
We start with a digression from the statistical learning problem to a more general problem, which may be of independent interest.
Consider a pair of random variables $X$ and $Y$ with joint distribution $P_{X,Y}$. Let $\bar X$ be an independent copy of $X$, and $\bar Y$ an independent copy of $Y$, such that $P_{\bar X,\bar Y} = P_{X}\otimes P_{Y}$.
For an arbitrary real-valued function $f:\sX \times \sY \to \R$,
we have the following upper bound on the absolute difference between $\E[f(X,Y)]$ and $\E[f(\bar X,\bar Y)]$.
\begin{lemma}[proved in Appendix~\ref{appd:abs_diff_KL}]\label{lm:abs_diff_KL}
If $f(\bar X,\bar Y)$ is $\sigma$-subgaussian under $P_{\bar X,\bar Y} = P_X\otimes P_Y$ \footnote{Recall that a random variable $U$ is $\sigma$-subgaussian if $\log\E[e^{\lambda(U-\E U)}] \le \lambda^2\sigma^2/2$ for all $\lambda\in\R$.}
, then
\begin{align}
\big|\E[f(X,Y)] - \E[f(\bar X,\bar Y)]\big| \le \sqrt{2\sigma^2 I(X;Y)} .
\end{align}
\end{lemma}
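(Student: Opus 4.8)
The plan is to use the Donsker--Varadhan variational representation of the Kullback--Leibler divergence together with the subgaussian assumption. Recall that $I(X;Y) = D(P_{X,Y} \| P_X \otimes P_Y)$, so I would like to exploit the fact that for any two distributions $P, Q$ on a common space and any measurable $g$ with $\E_Q[e^{g}] < \infty$, one has the inequality $\E_P[g] \le D(P\|Q) + \log\E_Q[e^{g}]$. Applying this with $P = P_{X,Y}$, $Q = P_{\bar X,\bar Y} = P_X \otimes P_Y$, and $g = \lambda\big(f(X,Y) - \E[f(\bar X,\bar Y)]\big)$ for a free parameter $\lambda \in \R$ gives
\begin{align}
\lambda\Big(\E[f(X,Y)] - \E[f(\bar X,\bar Y)]\Big) \le I(X;Y) + \log\E\big[e^{\lambda(f(\bar X,\bar Y) - \E[f(\bar X,\bar Y)])}\big].
\end{align}

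The next step is to invoke the $\sigma$-subgaussianity of $f(\bar X,\bar Y)$ under $P_X \otimes P_Y$, which by definition bounds the log-moment-generating function by $\lambda^2\sigma^2/2$ for every $\lambda$. Substituting, I obtain $\lambda\,\Delta \le I(X;Y) + \lambda^2\sigma^2/2$, where $\Delta \deq \E[f(X,Y)] - \E[f(\bar X,\bar Y)]$. This holds for all real $\lambda$, so it is a statement that the quadratic $\tfrac{\sigma^2}{2}\lambda^2 - \Delta\lambda + I(X;Y)$ is nonnegative for every $\lambda$; equivalently, its discriminant is nonpositive, i.e. $\Delta^2 \le 2\sigma^2 I(X;Y)$. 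Choosing the optimal $\lambda = \Delta/\sigma^2$ directly gives the same conclusion. To handle the absolute value, I would run the same argument with $f$ replaced by $-f$ (or, equivalently, note that the quadratic-discriminant bound controls $|\Delta|$ already since $\Delta^2 \le 2\sigma^2 I(X;Y)$ implies $|\Delta| \le \sqrt{2\sigma^2 I(X;Y)}$), yielding $\big|\E[f(X,Y)] - \E[f(\bar X,\bar Y)]\big| \le \sqrt{2\sigma^2 I(X;Y)}$.

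The step I expect to require the most care is the justification of the Donsker--Varadhan bound in the form I want, including the measure-theoretic subtleties: one needs $P_{X,Y}$ to be absolutely continuous with respect to $P_X \otimes P_Y$ for the inequality to be meaningful (if not, $I(X;Y) = \infty$ and the bound is vacuous), and one needs the subgaussianity hypothesis precisely to guarantee the exponential moment on the right-hand side is finite so that the variational inequality is not of the form ``finite $\le \infty$'' in a useless way — actually subgaussianity makes it finite, which is exactly what makes the optimization over $\lambda$ legitimate. A clean way to present this is to first prove the elementary lemma that $\E_P[g] - \log \E_Q[e^g] \le D(P\|Q)$ by applying Jensen's inequality to the convex function to the tilted measure $Q^{(g)}(\intd x) \propto e^{g(x)} Q(\intd x)$, then specialize. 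Everything after that is the one-line quadratic-in-$\lambda$ optimization, so the real content is this variational lemma plus the observation that subgaussianity is exactly the hypothesis that tames the log-MGF term uniformly in $\lambda$.
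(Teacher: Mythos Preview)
Your proposal is correct and follows essentially the same approach as the paper: apply the Donsker--Varadhan variational inequality with $P = P_{X,Y}$, $Q = P_X \otimes P_Y$, and test function $\lambda f$, invoke $\sigma$-subgaussianity to bound the log-MGF by $\lambda^2\sigma^2/2$, and then use the nonpositivity of the discriminant of the resulting quadratic in $\lambda$. The only cosmetic difference is that the paper writes the variational formula as a lower bound on $D(P_{X,Y}\|P_X\otimes P_Y)$ rather than an upper bound on $\E_P[g]$, and it does not spell out the absolute-continuity caveat you mention (which is indeed harmless since the bound is trivially true when $I(X;Y)=\infty$).
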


\subsection{Upper bound on expected generalization error}\label{sec:ub_gen_mi_n}
Upper-bounding the generalization error of a learning algorithm $P_{W|S}$ can be cast as a special case of the preceding problem, by setting
$
X = S,
$
$Y = W$, and 
$
f(s,w) =  \frac{1}{n} \sum_{i=1}^n \ell(w,z_i) .
$
For an arbitrary $w\in\sW$, the empirical risk can be expressed as
$
L_S(w)  =  f(S,w)
$
and the population risk can be expressed as
$
L_\mu(w)  = \E[f(S,w)] .
$
Moreover, the expected generalization error can be written as
\begin{align}
{\rm gen}(\mu,P_{W|S}) 
&= \E[f(\bar S,\bar W)] - \E[f(S,W)] ,
\end{align}
where the joint distribution of $S$ and $W$ is $P_{S,W} = \mu^{\otimes n}\otimes P_{W|S}$.
If $\ell(w,Z)$ is $\sigma$-subgaussian for all $w\in\sW$, then $f(S,w)$ is $\sigma/\sqrt{n}$-subgaussian due to the i.i.d.\ assumption on $Z_i$'s, hence $f(\bar S,\bar W)$ is $\sigma/\sqrt{n}$-subgaussian. This, together with Lemma~\ref{lm:abs_diff_KL}, leads to the following theorem.
\begin{theorem}\label{th:gen_err_mi}
Suppose $\ell(w,Z)$ is $\sigma$-subgaussian under $\mu$ for all $w\in\sW$, then
\begin{align}\label{eq:gen_err_ub_mi}
\big| {\rm gen}(\mu,P_{W|S}) \big| \le \sqrt{\frac{2\sigma^2}{n}I(S;W)} .
\end{align}
\end{theorem}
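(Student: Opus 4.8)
The plan is to read off Theorem~\ref{th:gen_err_mi} from Lemma~\ref{lm:abs_diff_KL} via the correspondence already set up above: take $X=S$, $Y=W$, and $f(s,w)=\frac1n\sum_{i=1}^n\ell(w,z_i)$, so that $I(X;Y)=I(S;W)$ and $\E[f(\bar S,\bar W)]-\E[f(S,W)]={\rm gen}(\mu,P_{W|S})$. The only thing left to check is the hypothesis of the lemma, namely that $f(\bar S,\bar W)$ (suitably centered) is $(\sigma/\sqrt n)$-subgaussian under $P_{\bar S,\bar W}=\mu^{\otimes n}\otimes P_W$; the stated bound is then immediate by substituting $\sigma/\sqrt n$ for $\sigma$ in Lemma~\ref{lm:abs_diff_KL}.

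For the subgaussianity, first fix $w\in\sW$: then $f(\bar S,w)=\frac1n\sum_{i=1}^n\ell(w,\bar Z_i)$ is a normalized sum of $n$ i.i.d.\ $\sigma$-subgaussian terms, hence $(\sigma/\sqrt n)$-subgaussian with mean $L_\mu(w)$, i.e.\ $\log\E_{\bar S}[e^{\lambda(f(\bar S,w)-L_\mu(w))}]\le\lambda^2\sigma^2/(2n)$ for all $\lambda$ and all $w$. The delicate point — and the one I would treat most carefully — is that this is a \emph{per-$w$} estimate, whereas $\bar W$ is random: a mixture of $(\sigma/\sqrt n)$-subgaussian variables whose means $L_\mu(w)$ vary with $w$ is generally \emph{not} $(\sigma/\sqrt n)$-subgaussian, since the dispersion of $L_\mu(W)$ under $P_W$ enlarges the tails. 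I would sidestep this by working with the centered function $\tilde f(s,w)\deq f(s,w)-L_\mu(w)$: subtracting a function of $w$ alone leaves $I(S;W)$ untouched and, since $\E_{\bar S}[\tilde f(\bar S,w)]=0$ for every $w$, leaves the difference $\E[\tilde f(\bar S,\bar W)]-\E[\tilde f(S,W)]$ equal to ${\rm gen}(\mu,P_{W|S})$ (indeed $\E[\tilde f(\bar S,\bar W)]=0$). Now $\tilde f(\bar S,w)$ is mean-zero for each $w$, so conditioning on $\bar W$ gives $\E[e^{\lambda\tilde f(\bar S,\bar W)}]=\E_{P_W}\!\big[\E_{\bar S}[e^{\lambda\tilde f(\bar S,\bar W)}\mid\bar W]\big]\le e^{\lambda^2\sigma^2/(2n)}$, so $\tilde f(\bar S,\bar W)$ is genuinely $(\sigma/\sqrt n)$-subgaussian under the product law.

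Applying Lemma~\ref{lm:abs_diff_KL} to $\tilde f$ with parameter $\sigma/\sqrt n$ then yields $\big|\E[\tilde f(S,W)]-\E[\tilde f(\bar S,\bar W)]\big|\le\sqrt{(2\sigma^2/n)\,I(S;W)}$, which is precisely \eqref{eq:gen_err_ub_mi}. An alternative that avoids centering altogether is to redo the Donsker--Varadhan step inside the proof of Lemma~\ref{lm:abs_diff_KL} conditionally on $\bar W$ and integrate over $\bar W$ only afterwards; the centering above is just the most economical bookkeeping. I do not anticipate any other obstacle — the i.i.d.\ structure and the definition of subgaussianity do the rest of the work.
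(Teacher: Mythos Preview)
Your proposal is correct and follows exactly the paper's route: apply Lemma~\ref{lm:abs_diff_KL} with $X=S$, $Y=W$, and $f(s,w)=\frac1n\sum_{i=1}^n\ell(w,z_i)$. In fact you are more careful than the paper, which simply asserts that ``$f(\bar S,\bar W)$ is $\sigma/\sqrt{n}$-subgaussian'' without addressing the varying conditional means $L_\mu(w)$; your centering by $L_\mu(w)$ (or the equivalent conditional Donsker--Varadhan argument you mention) is precisely what is needed to make that step rigorous, and it leaves both $I(S;W)$ and the difference of expectations unchanged.
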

Theorem~\ref{th:gen_err_mi} suggests that, by controlling the mutual information between the input and the output of a learning algorithm, we can control its generalization error.
The theorem allows us to consider unbounded loss functions as long as the subgaussian condition is satisfied. For a bounded loss function $\ell(\cdot,\cdot)\in[a,b]$, $\ell(w,Z)$ is guaranteed to be $(b-a)/2$-subgaussian for all $\mu$ and all $w\in\sW$.

Russo and Zou \cite{RusZou16} considered the same problem setup with the restriction that the hypothesis space $\sW$ is finite, and showed that $| {\rm gen}(\mu,P_{W|S}) |$ can be upper-bounded in terms of $I(\Lambda_{\sW}(S);W)$, where
\begin{equation}
\Lambda_{\sW}(S) \deq \big(L_S(w)\big)_{w\in\sW} 
\end{equation}
is the collection of empirical risks of the hypotheses in $\sW$.
Using Lemma~\ref{lm:abs_diff_KL} by setting $X = \Lambda_\sW(S)$, $Y = W$, and $f(\Lambda_\sW(s),w) = L_s(w)$, we immediately recover the result by Russo and Zou even when $\sW$ is uncountably infinite:
\begin{theorem}[Russo and Zou \cite{RusZou16}]\label{th:gen_err_ub_mi_RusZou}
Suppose $\ell(w,Z)$ is $\sigma$-subgaussian under $\mu$ for all $w\in\sW$, then
\begin{align}\label{eq:gen_err_ub_mi_RusZou}
\big| {\rm gen}(\mu,P_{W|S}) \big| \le \sqrt{\frac{2\sigma^2}{n}I(\Lambda_{\sW}(S);W)} .
\end{align}
\end{theorem}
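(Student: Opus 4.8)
The plan is to invoke the decoupling estimate of Lemma~\ref{lm:abs_diff_KL}, but with the ``input'' variable taken to be the collection of empirical risks $\Lambda_{\sW}(S)$ rather than the dataset $S$ itself. Concretely I would set $X=\Lambda_{\sW}(S)$, $Y=W$, and choose the test function to be the empirical risk \emph{centered by the population risk},
\[
 f\big(\Lambda_{\sW}(s),w\big)\deq L_s(w)-L_\mu(w).
\]
This is a legitimate choice: $f$ depends on $s$ only through the process $\Lambda_{\sW}(s)=\big(L_s(w)\big)_{w\in\sW}$, and on $w$ in an otherwise unrestricted way (the subtracted term $L_\mu(w)$ is a fixed function of $w$), so $f$ is a bona fide map $\sX\times\sY\to\R$ with $\sX$ the range of $\Lambda_{\sW}(\cdot)$ equipped with its cylindrical $\sigma$-algebra --- the same convention under which $I(\Lambda_{\sW}(S);W)$ is defined, which makes no use of countability of $\sW$.

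Next I would identify the two expectations in Lemma~\ref{lm:abs_diff_KL}. Under the joint law $P_{S,W}=\mu^{\otimes n}\otimes P_{W|S}$ one has $\E[f(X,Y)]=\E[L_S(W)]-\E[L_\mu(W)]=-\,{\rm gen}(\mu,P_{W|S})$. Under the product law $P_{\bar X,\bar Y}=P_{\Lambda_{\sW}(S)}\otimes P_W$, the fictitious dataset $\bar S=(\bar Z_1,\ldots,\bar Z_n)$ underlying $\bar X=\Lambda_{\sW}(\bar S)$ is i.i.d.\ $\sim\mu$ and independent of $\bar W$, so $\E[L_{\bar S}(w)]=L_\mu(w)$ for each fixed $w$ and hence $\E[f(\bar X,\bar Y)]=\E_{\bar W}\big[L_\mu(\bar W)-L_\mu(\bar W)\big]=0$. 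Thus $\big|\E[f(X,Y)]-\E[f(\bar X,\bar Y)]\big|=\big|{\rm gen}(\mu,P_{W|S})\big|$, the left-hand side of \eqref{eq:gen_err_ub_mi_RusZou}.

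The step needing the most care is the subgaussian hypothesis of Lemma~\ref{lm:abs_diff_KL}: I must check that $f(\bar X,\bar Y)=L_{\bar S}(\bar W)-L_\mu(\bar W)$ is $(\sigma/\sqrt n)$-subgaussian under $P_{\bar X,\bar Y}$. This is exactly where centering by $L_\mu$ pays off. Conditioned on $\bar W=w$, the variable $f(\bar X,w)=\frac1n\sum_{i=1}^n\big(\ell(w,\bar Z_i)-L_\mu(w)\big)$ is a normalized sum of $n$ i.i.d.\ zero-mean $\sigma$-subgaussian terms, hence $(\sigma/\sqrt n)$-subgaussian \emph{with conditional mean $0$}, and this holds uniformly over $w$; averaging the moment-generating-function bound $\E\big[e^{\lambda f(\bar X,w)}\big]\le e^{\lambda^2\sigma^2/(2n)}$ over $\bar W$ preserves both the zero mean and the parameter, so $f(\bar X,\bar Y)$ is $(\sigma/\sqrt n)$-subgaussian. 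By contrast, the uncentered choice $f=L_s(w)$ mixes over $w$ conditional laws with \emph{differing} means $L_\mu(w)$ and in general is subgaussian only with the inflated parameter $\sqrt{\sigma^2/n+\Var(L_\mu(W))}$, so the $L_\mu$-centering is not cosmetic. Lemma~\ref{lm:abs_diff_KL} now yields $\big|{\rm gen}(\mu,P_{W|S})\big|\le\sqrt{2(\sigma^2/n)\,I(\Lambda_{\sW}(S);W)}$, which is \eqref{eq:gen_err_ub_mi_RusZou}.

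Finally I would add a one-line remark placing the result relative to Theorem~\ref{th:gen_err_mi}: since $\Lambda_{\sW}(S)$ is a measurable function of $S$, the data-processing inequality gives $I(\Lambda_{\sW}(S);W)\le I(S;W)$, so \eqref{eq:gen_err_ub_mi_RusZou} is always at least as tight as \eqref{eq:gen_err_ub_mi}; the improvement comes purely from handing the decoupling lemma the ``sufficient statistic'' $\Lambda_{\sW}(S)$ of the test function instead of $S$, and no finiteness of $\sW$ enters anywhere. The only genuine obstacle in the whole argument is the subgaussianity bookkeeping above; the rest is substitution into Lemma~\ref{lm:abs_diff_KL}.
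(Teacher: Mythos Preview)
Your proposal is correct and follows the same route as the paper: apply Lemma~\ref{lm:abs_diff_KL} with $X=\Lambda_{\sW}(S)$ and $Y=W$. The only difference is the choice of test function. The paper takes the uncentered $f(\Lambda_{\sW}(s),w)=L_s(w)$ and simply asserts that $f(\bar S,\bar W)$ is $\sigma/\sqrt n$-subgaussian; you instead center by $L_\mu(w)$, and you are right that this is the cleaner choice: a mixture over $\bar W$ of $\sigma/\sqrt n$-subgaussian laws with \emph{different} conditional means $L_\mu(\bar W)$ need not be $\sigma/\sqrt n$-subgaussian in the sense of the paper's own definition, so the paper's one-line verification is, strictly read, a gap. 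Since shifting $f$ by any function of $w$ alone leaves both $\E[f(X,Y)]-\E[f(\bar X,\bar Y)]$ (the marginals of $Y$ and $\bar Y$ agree) and $I(X;Y)$ unchanged, the two choices are equivalent inputs to Lemma~\ref{lm:abs_diff_KL}; your centering forces all conditional means to zero and makes the subgaussian hypothesis hold on the nose. In short: same proof, but your bookkeeping on the subgaussianity step is more careful than the paper's.
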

It should be noted that Theorem~\ref{th:gen_err_mi} can be obtained as a consequence of Theorem~\ref{th:gen_err_ub_mi_RusZou} because
\begin{align}\label{eq:Lmi_le_mi}
I(\Lambda_\sW(S) ; W) \le I(S;W) ,
\end{align}
which is due to the Markov chain $\Lambda_{\sW}(S) - S - W $, as for each $w\in\sW$, $L_S(w)$ is a function of $S$.
However, if the output $W$ depends on $S$ only through the empirical risks $\Lambda_{\sW}(S)$, in other words, when the Markov chain
$
S - \Lambda_{\sW}(S) - W
$
holds, then Theorem~\ref{th:gen_err_mi} and Theorem~\ref{th:gen_err_ub_mi_RusZou} are equivalent.
The advantage of Theorem~\ref{th:gen_err_mi} is that $I(S;W)$ can be much easier to evaluate than $I(\Lambda_\sW(S);W)$, and can provide better insights to guide the algorithm design.
We will elaborate on this when we discuss the Gibbs algorithm and the adaptive composition of learning algorithms.

Theorem~\ref{th:gen_err_mi} and Theorem~\ref{th:gen_err_ub_mi_RusZou} only provide upper bounds on the expected generalization error. We are often interested in analyzing the absolute generalization error $|L_\mu(W) - L_S(W)|$, e.g., its expected value or the probability for it to be small. 
We need to develop stronger tools to tackle these problems, which is the subject of the next two subsections.

\subsection{A concentration inequality for $|L_\mu(W) - L_S(W)|$}
For any fixed $w\in \sW$, if $\ell(w,Z)$ is $\sigma$-subgaussian, the Chernoff-Hoeffding bound gives
%\begin{align}
$
\PP[|L_\mu(w) - L_{S}(w)| > \alpha] \le 2 e^{-\alpha^2 n/2\sigma^2} .
$
%\end{align}
It implies that, if $S$ and $W$ are independent, then a sample size of 
\begin{align}\label{eq:nonadap_n_alpha_beta}
n = \frac{2\sigma^2}{\alpha^2} \log\frac{2}{\beta}
\end{align}
suffices to guarantee 
\begin{align}\label{eq:abs_gen_alpha_beta}
\PP[|L_\mu(W) - L_S(W)| > \alpha] \le \beta .
\end{align}
The following results show that, when $W$ is dependent on $S$, as long as $I(S;W)$ is sufficiently small, a sample complexity polynomial in $1/\alpha$ and logarithmic in $1/\beta$ still suffices to guarantee \eqref{eq:abs_gen_alpha_beta}, where the probability now is taken with respect to the joint distribution $P_{S,W} = \mu^{\otimes n}\otimes P_{W|S}$.
\begin{theorem}[proved in Appendix~\ref{appd:gen_err_hp_mi_gen_ell}]\label{th:gen_err_hp_mi_gen_ell}
Suppose $\ell(w,Z)$ is $\sigma$-subgaussian under $\mu$ for all $w\in\sW$.
If a learning algorithm satisfies $I(\Lambda_\sW(S) ; W) \le \eps$,
%\begin{align}\label{eq:Lmi_eps_cond1}
%I(\Lambda_\sW(S) ; W) \le \eps ,
%\end{align}
then for any $\alpha>0$ and $0< \beta \le 1$, \eqref{eq:abs_gen_alpha_beta} can be guaranteed by a sample complexity of
\begin{align}\label{eq:n_size_MI_subG}
n = \frac{8\sigma^2}{\alpha^2}\left(\frac{\eps}{\beta} + \log\frac{2}{\beta}\right)  .
\end{align}
\end{theorem}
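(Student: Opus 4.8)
The plan is to cast the high-probability claim \eqref{eq:abs_gen_alpha_beta} as a change-of-measure estimate at the level of probabilities, paralleling the expectation-level bound in Lemma~\ref{lm:abs_diff_KL}. Set $X \deq \Lambda_\sW(S)$ and $Y \deq W$, and note that the bad event
\begin{align*}
E \deq \big\{ |L_\mu(W) - L_S(W)| > \alpha \big\}
\end{align*}
is $\sigma(X,Y)$-measurable, since $L_S(W)$ is the $W$-th coordinate of $\Lambda_\sW(S)$ while $L_\mu(W)$ depends on $W$ alone. Write $p \deq P_{X,Y}(E)$ for the quantity to be bounded by $\beta$, and $q \deq (P_X\otimes P_Y)(E)$ for the probability of the same event when the empirical risks and the output hypothesis are drawn independently.

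First I would bound $q$. Under $P_X\otimes P_Y$ the conditional law of $L_{\bar S}(w)$ given $\bar Y = w$ is just its law under $\mu^{\otimes n}$; since $\ell(w,Z)$ is $\sigma$-subgaussian, $L_{\bar S}(w)$ is $\sigma/\sqrt{n}$-subgaussian about $L_\mu(w)$, so the Chernoff--Hoeffding bound gives $\PP[|L_\mu(w) - L_{\bar S}(w)| > \alpha] \le 2e^{-\alpha^2 n/2\sigma^2}$ for every $w\in\sW$; averaging over $\bar Y\sim P_Y$ yields $q \le 2e^{-\alpha^2 n/2\sigma^2}$. Crucially this needs no union bound over $\sW$, so there is no dependence on its cardinality.

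Second I would transfer the estimate to the joint law. Because $\I_E$ is a deterministic function of $(X,Y)$, the data-processing inequality for relative entropy gives $d(p\|q) \le D(P_{X,Y}\,\|\,P_X\otimes P_Y) = I(\Lambda_\sW(S);W) \le \eps$, where $d(p\|q) = p\log\frac{p}{q} + (1-p)\log\frac{1-p}{1-q}$ is the binary relative entropy. The remaining step is to invert this inequality. I would use the elementary lower bound $d(p\|q) \ge p\log\frac{p}{eq}$ (a consequence of $(1-p)\log\frac{1-p}{1-q}\ge -p$), so that $p\log\frac{p}{eq}\le\eps$, together with the fact that $x\mapsto x\log\frac{x}{eq}$ is nondecreasing on $[q,\infty)$. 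Substituting $n = \frac{8\sigma^2}{\alpha^2}\big(\frac{\eps}{\beta} + \log\frac{2}{\beta}\big)$ makes $q \le \frac{\beta^4}{8}e^{-4\eps/\beta} \le \beta\,e^{-1-\eps/\beta}$ (using $0<\beta\le1$ and $\eps\ge0$, with ample slack), equivalently $\beta\log\frac{\beta}{eq}\ge\eps$; combined with $p\log\frac{p}{eq}\le\eps$ and monotonicity this forces $p\le\beta$, after separately handling the trivial case $p\le q$ (where $p\le q\le\frac{\beta^4}{8}\le\beta$ directly).

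I expect the inversion of the binary relative-entropy inequality to be the delicate step: it is what produces the particular form of \eqref{eq:n_size_MI_subG}, namely the additive decomposition into an $\eps/\beta$ term (the price of dependence) and a $\log(2/\beta)$ term (the price of confidence), and it pins down the constant $8$. Everything else --- Chernoff--Hoeffding, the data-processing inequality, and the monotonicity bookkeeping --- is routine. Finally I would remark that the argument only uses $I(\Lambda_\sW(S);W)\le\eps$, which is why the hypothesis is stated with this smaller quantity rather than $I(S;W)$; by \eqref{eq:Lmi_le_mi} the same sample complexity guarantees \eqref{eq:abs_gen_alpha_beta} whenever $I(S;W)\le\eps$.
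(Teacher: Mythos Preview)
Your argument is correct and takes a genuinely different route from the paper's. The paper proves Theorem~\ref{th:gen_err_hp_mi_gen_ell} via the \emph{monitor technique} of Bassily et al.: it runs $m=\lfloor 1/\beta\rfloor$ independent copies of the algorithm, applies the expectation-level decoupling estimate (Lemma~\ref{lm:abs_diff_KL}) to the signed maximum of the $m$ generalization gaps to obtain
\[
\E\Big[\max_{t\le m}\big|L_{S_t}(W_t)-L_\mu(W_t)\big|\Big]\le\sqrt{\tfrac{2\sigma^2}{n}\big(m\eps+\log(2m)\big)},
\]
and then argues by contradiction that \eqref{eq:abs_gen_alpha_beta} must hold. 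Your proof bypasses the monitor entirely: you push the whole joint distribution through the indicator $\I_E$ and use data processing for KL at the level of a single Bernoulli, then invert the binary-entropy inequality. This is more elementary---no replication, no contradiction, and Lemma~\ref{lm:abs_diff_KL} is never invoked---and it makes transparent why the sample complexity splits as $\eps/\beta+\log(2/\beta)$. What the paper's approach buys in return is Theorem~\ref{th:gen_err_abs_mi_gen_ell} for free (it drops out of the $m=1$ case of the monitor bound), whereas your route does not immediately yield a bound on $\E|L_\mu(W)-L_S(W)|$.
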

In view of \eqref{eq:Lmi_le_mi}, any learning algorithm that is $(\eps,\mu)$-stable in input-output mutual information satisfies the condition $I(\Lambda_\sW(S) ; W) \le \eps$.
The proof of Theorem~\ref{th:gen_err_hp_mi_gen_ell} is based on Lemma~\ref{lm:abs_diff_KL} and an adaptation of the ``monitor technique'' proposed by Bassily et al.~\cite{AlStDp}.
While the high-probability bounds of  \cite{DFH14_dp,Dwork_adp_holdout,AlStDp} based on differential privacy are for bounded loss functions and for functions with bounded differences, the result in Theorem~\ref{th:gen_err_hp_mi_gen_ell} only requires $\ell(w,Z)$ to be subgaussian.
We have the following  corollary of Theorem~\ref{th:gen_err_hp_mi_gen_ell}.
\begin{corollary}\label{co:gen_err_hp_mi_gen_ell}
Under the conditions in Theorem~\ref{th:gen_err_hp_mi_gen_ell}, if for some function $g(n)\ge 1$,
$
\eps \le (g(n)-1)\beta\log\frac{2}{\beta} ,
$
then a sample complexity that satisfies
$
{n}/{g(n)} \ge \frac{8\sigma^2}{\alpha^2}\log\frac{2}{\beta}
$
guarantees \eqref{eq:abs_gen_alpha_beta}.
\end{corollary}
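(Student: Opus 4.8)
The plan is to reduce the statement entirely to Theorem~\ref{th:gen_err_hp_mi_gen_ell}: I will show that the two hypotheses of the corollary --- the bound $\eps \le (g(n)-1)\beta\log\frac{2}{\beta}$ on the mutual information and the sample-size condition $n/g(n) \ge \frac{8\sigma^2}{\alpha^2}\log\frac{2}{\beta}$ --- together force $n$ to be at least the threshold $\frac{8\sigma^2}{\alpha^2}\big(\frac{\eps}{\beta}+\log\frac{2}{\beta}\big)$ appearing in \eqref{eq:n_size_MI_subG}. Once that is established, \eqref{eq:abs_gen_alpha_beta} follows by a direct appeal to Theorem~\ref{th:gen_err_hp_mi_gen_ell}, since the hypothesis $I(\Lambda_\sW(S);W)\le\eps$ is inherited unchanged. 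No new probabilistic input is required; the work is essentially a one-line algebraic manipulation.

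Concretely, first I would divide the inequality $\eps \le (g(n)-1)\beta\log\frac{2}{\beta}$ through by $\beta$ and add $\log\frac{2}{\beta}$ to both sides, obtaining $\frac{\eps}{\beta}+\log\frac{2}{\beta} \le g(n)\log\frac{2}{\beta}$ (here $g(n)\ge 1$ and $\eps\ge 0$ are only used to keep all quantities nonnegative and the rearrangement meaningful). Multiplying by $\frac{8\sigma^2}{\alpha^2}$ and then invoking the assumed inequality $n \ge g(n)\cdot\frac{8\sigma^2}{\alpha^2}\log\frac{2}{\beta}$ gives $n \ge \frac{8\sigma^2}{\alpha^2}\big(\frac{\eps}{\beta}+\log\frac{2}{\beta}\big)$, which is exactly the sample complexity demanded by Theorem~\ref{th:gen_err_hp_mi_gen_ell}. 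Applying that theorem then yields \eqref{eq:abs_gen_alpha_beta}.

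The only subtlety, and the closest thing to an obstacle, is that Theorem~\ref{th:gen_err_hp_mi_gen_ell} is phrased for the exact sample size $n = \frac{8\sigma^2}{\alpha^2}\big(\frac{\eps}{\beta}+\log\frac{2}{\beta}\big)$, whereas here I only know that $n$ is at least that large. I would handle this by observing that the guarantee is monotone in the sample size --- having more i.i.d.\ samples cannot hurt, and the monitor-technique proof of Theorem~\ref{th:gen_err_hp_mi_gen_ell} delivers a bound that is non-increasing in $n$ --- so ``$n$ equals the threshold'' may be replaced by ``$n$ is at least the threshold'' without loss. With this remark in place the corollary is immediate.
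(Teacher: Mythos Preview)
Your proposal is correct and mirrors exactly what the paper does: the corollary is stated without a separate proof because it follows from Theorem~\ref{th:gen_err_hp_mi_gen_ell} by the one-line algebra you spell out, namely $\frac{\eps}{\beta}+\log\frac{2}{\beta}\le g(n)\log\frac{2}{\beta}\le \frac{\alpha^2 n}{8\sigma^2}$. Your remark about monotonicity in $n$ is also on point---the contradiction argument in Appendix~\ref{appd:gen_err_hp_mi_gen_ell} actually yields the conclusion whenever $n$ is at least the threshold in \eqref{eq:n_size_MI_subG}, not only at equality.
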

For example, taking $g(n)=2$, Corollary~\ref{co:gen_err_hp_mi_gen_ell} implies that if 
$
\eps \le \beta\log({2}/{\beta}) ,
$
then \eqref{eq:abs_gen_alpha_beta} can be guaranteed by a sample complexity of
$
n = ({16\sigma^2}/{\alpha^2}) \log({2}/{\beta}) ,
$
which is on the same order of the sample complexity when $S$ and $W$ are independent as in \eqref{eq:nonadap_n_alpha_beta}.
As another example, taking $g(n)=\sqrt{n}$, Corollary~\ref{co:gen_err_hp_mi_gen_ell} implies that if  
$
\eps \le (\sqrt{n}-1)\beta \log({2}/{\beta}) ,
$
then a sample complexity of 
$
n = ({64\sigma^4}/{\alpha^4})\left(\log({2}/{\beta})\right)^2
$
guarantees \eqref{eq:abs_gen_alpha_beta}.

\subsection{Upper bound on $\E|L_\mu(W) - L_S(W)|$}
A byproduct of the proof of Theorem~\ref{th:gen_err_hp_mi_gen_ell} (setting $m=1$ in the proof) is an upper bound on the expected absolute generalization error.
\begin{theorem}\label{th:gen_err_abs_mi_gen_ell}
Suppose $\ell(w,Z)$ is $\sigma$-subgaussian under $\mu$ for all $w\in\sW$.
If a learning algorithm satisfies that $I(\Lambda_\sW(S) ; W) \le \eps$, then
\begin{align}
\E \big|L_\mu(W) - L_S(W)\big| \le \sqrt{\frac{2\sigma^2}{n} (\eps + \log 2)}  .
\end{align}
\end{theorem}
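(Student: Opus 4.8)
The plan is to reduce the expected absolute generalization error to a single application of the decoupling estimate in Lemma~\ref{lm:abs_diff_KL}, after augmenting the output of the algorithm with a one-bit ``sign'' variable recording whether $L_\mu(W)-L_S(W)$ is positive or negative. This is precisely the $m=1$ instance of the monitor technique: with only one hypothesis to watch, the monitor's sole degree of freedom is to flip a sign so as to expose the absolute error.

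Concretely, I would introduce $\Xi \deq \mathrm{sign}\big(L_\mu(W)-L_S(W)\big)\in\{-1,+1\}$ (breaking ties arbitrarily), which is a deterministic function of $(\Lambda_\sW(S),W)$ since $L_S(W)$ is determined by $\Lambda_\sW(S)$ and $W$ while $L_\mu(W)$ is determined by $W$ alone. I would then invoke Lemma~\ref{lm:abs_diff_KL} with $X=\Lambda_\sW(S)$, $Y=(W,\Xi)$, and $f\big((\ell_w)_{w\in\sW},(w,\xi)\big)=\xi\,(\ell_w-L_\mu(w))$, so that $f(X,Y)=\Xi\,(L_S(W)-L_\mu(W))$ and $\E[f(X,Y)]=-\E|L_\mu(W)-L_S(W)|$. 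Under the decoupled law $P_X\otimes P_Y$, conditioning on $Y=(w,\xi)$ makes $f(\bar X,\bar Y)=\xi\big(L_{\bar S}(w)-L_\mu(w)\big)$ an average of $n$ i.i.d.\ zero-mean $\sigma$-subgaussian terms multiplied by $\pm1$, hence $\sigma/\sqrt n$-subgaussian with zero conditional mean; since $\E[e^{\lambda f(\bar X,\bar Y)}]=\E_{\bar Y}\E[e^{\lambda f(\bar X,\bar Y)}\mid\bar Y]\le e^{\lambda^2\sigma^2/(2n)}$ and $\E[f(\bar X,\bar Y)]=0$, the function $f(\bar X,\bar Y)$ is $\sigma/\sqrt n$-subgaussian unconditionally, so the hypothesis of Lemma~\ref{lm:abs_diff_KL} holds.

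It then remains to bound $I(X;Y)=I(\Lambda_\sW(S);W,\Xi)$. By the chain rule, $I(\Lambda_\sW(S);W,\Xi)=I(\Lambda_\sW(S);W)+I(\Lambda_\sW(S);\Xi\mid W)\le\eps+H(\Xi\mid W)\le\eps+\log 2$, using the hypothesis $I(\Lambda_\sW(S);W)\le\eps$ and that $\Xi$ is binary. Feeding $\sigma/\sqrt n$ and $\eps+\log 2$ into Lemma~\ref{lm:abs_diff_KL} gives $\E|L_\mu(W)-L_S(W)|=\big|\E[f(\bar X,\bar Y)]-\E[f(X,Y)]\big|\le\sqrt{(2\sigma^2/n)(\eps+\log 2)}$, which is the claim.

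I expect the only delicate point to be the subgaussianity bookkeeping: one must center $f$ by subtracting $L_\mu(w)$ (rather than working with $\xi L_s(w)$ directly) so that the conditional mean under the product law vanishes and the ``conditionally subgaussian with zero conditional mean $\Rightarrow$ subgaussian'' averaging step applies with a uniform constant; otherwise the conditional means $\xi L_\mu(w)$ would spread out across $w$ and spoil the bound. Everything else is routine: the sign variable costs exactly one bit of mutual information, and the core estimate is already supplied by Lemma~\ref{lm:abs_diff_KL}.
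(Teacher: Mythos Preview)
Your proposal is correct and follows exactly the route the paper indicates: it is the $m=1$ instance of the monitor construction used for Theorem~\ref{th:gen_err_hp_mi_gen_ell}, with the single sign bit $\Xi$ playing the role of $R^*$ (and $T^*$ trivially equal to~$1$), after which one applies Lemma~\ref{lm:abs_diff_KL} and bounds $I(\Lambda_\sW(S);W,\Xi)\le\eps+\log 2$ via the chain rule. Your explicit centering by $L_\mu(w)$ is a welcome refinement: it makes the verification that $f(\bar X,\bar Y)$ is $\sigma/\sqrt n$-subgaussian under the product law completely clean, whereas the paper's Lemma~B.2 asserts the analogous subgaussianity somewhat tersely.
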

This result improves \cite[Prop.~3.2]{RusZou16}, which states that $
\E \big|L_S(W) - L_\mu(W)\big| \le \sigma/\sqrt{n} + 36\sqrt{{2\sigma^2\eps}/{n} }  
$.
Theorem~\ref{th:gen_err_abs_mi_gen_ell} together with Markov's inequality implies that \eqref{eq:abs_gen_alpha_beta} can be guaranteed by 
$
n = \frac{2\sigma^2}{\alpha^2\beta^2} \big(\eps + \log2\big) ,
$
but it has a worse dependence on $\beta$ as compared to the sample complexity given by Theorem~\ref{th:gen_err_hp_mi_gen_ell}.

\section{Learning algorithms with input-output mutual information stability}
In this section, we discuss several learning problems and algorithms from the viewpoint of input-output mutual information stability.
We first consider two cases where the input-output mutual information can be upper-bounded via the properties of the hypothesis space.
Then we propose two learning algorithms with controlled input-output mutual information by regularizing the ERM algorithm.
We also discuss other methods to induce input-output mutual information stability, and the stability of learning algorithms obtained from adaptive composition of constituent algorithms.

\subsection{Countable hypothesis space}
When the hypothesis space is countable, the input-output mutual information can be directly upper-bounded by $H(W)$, the entropy of $W$.
If $|\sW|=k$, we have $H(W) \le \log k$.
From Theorem~\ref{th:gen_err_mi}, if $\ell(w,Z)$ is $\sigma$-subgaussian for all $w\in\sW$, then for any learning algorithm $P_{W|S}$ with countable $\sW$,
\begin{align}\label{eq:gen_finite_W}
\big| {\rm gen}(\mu,P_{W|S}) \big| 
\le \sqrt{\frac{2\sigma^2 H(W)}{n}} .
%\le \sqrt{\frac{2\sigma^2\log k}{n}} ,
\end{align}
For the ERM algorithm, the upper bounds for the expected generalization error also hold for the expected excess risk, since the empirical risk of the ERM algorithm satisfies
\begin{align}
\E [L_S(W_{\rm ERM})] &= \E\Big[\inf_{w\in\sW} L_S(w)\Big] 
\le \inf_{w\in\sW} \E[ L_S(w)] 
= \inf_{w\in\sW}  L_\mu(w) . \label{eq:ERM_le_minTrue}
\end{align}
For an uncountable hypothesis space, we can always convert it to a finite one by quantizing the output hypothesis. For example, if $\sW \subset \R^m$, we can define the covering number $N(r,\sW)$ as the cardinality of the smallest set $\sW' \subset \R^m$ such that for all $w\in\sW$ there is $w'\in\sW'$ with $\|w-w'\|\le r$, and we can use $\sW'$ as the codebook for quantization.
The final output hypothesis $W'$ will be an element of $\sW'$.
If $\sW$ lies in a $d$-dimensional subspace of $\R^m$ and $\max_{w\in\sW}\|w\| = B$, then setting $r = 1/\sqrt{n}$, we have $N(r,\sW) \le (2B\sqrt{dn})^d$, and under the subgaussian condition of $\ell$,
\begin{align}
\big| {\rm gen}(\mu,P_{W'|S}) \big| 
\le \sqrt{\frac{2\sigma^2 d}{n}\log\big(2B\sqrt{dn}\big)} .
\end{align}

\subsection{Binary Classification}
For the problem of binary classification, $\sZ = \sX \times \sY$, $\sY = \{0,1\}$, $\sW$ is a collection of classifiers $w:\sX\rightarrow\sY$, which could be uncountably infinite, and $\ell(w,z) = \I\{w(x) \neq y\}$.
Using Theorem~\ref{th:gen_err_mi}, we can perform a simple analysis of the following two-stage algorithm \cite{Bue_Kum96_1,DGLbook96} that can achieve the same performance as ERM.
Given the dataset $S$, split it into $S_1$ and $S_2$ with lengths $n_1$ and $n_2$. First, pick a subset of hypotheses $\sW_1 \subset \sW$ based on $S_1$ such that $(w(X_1),\ldots,w(X_{n_1}))$ for $w\in\sW_1$ are all distinct and $\{(w(X_1),\ldots,w(X_{n_1})), w\in\sW_1\} = \{(w(X_1),\ldots,w(X_{n_1})),w\in\sW\}$. In other words, $\sW_1$ forms an empirical cover of $\sW$ with respect to $S_1$. Then pick a hypothesis from $\sW_1$  with the minimal empirical risk on $S_2$, i.e.,
\begin{align}\label{eq:2stage_ERM}
W = \argmin_{w\in\sW_1} L_{S_2}(w) .
\end{align}
Denoting the $n$th shatter coefficient and the VC dimension of $\sW$ by $\mathbb S_{n}$ and $V$, we can upper-bound the expected generalization error of $W$ with respect to $S_2$ as
\begin{align}\label{eq:gen_2stage}
\E[L_\mu(W)] - \E[L_{S_2}(W)] = \E\big[\E[L_\mu(W) - L_{S_2}(W)|S_1]\big] \le \sqrt{\frac{V\log(n_1+1)}{2n_2}} ,
\end{align}
where we have used the fact that $I(S_2;W|S_1 = s_1) \le H(W|S_1 = s_1) \le \log{\mathbb S}_{n_1} \le V\log(n_1+1)$, by Sauer's Lemma, and Theorem~\ref{th:gen_err_mi}.
It can also be shown that \cite{Bue_Kum96_1,DGLbook96}
\begin{align}\label{eq:emp_2stage}
\E[L_{S_2}(W)] \le \E\Big[\inf_{w\in\sW_1}L_\mu(w)\Big] \le \inf_{w\in\sW}L_\mu(w) + c\sqrt{\frac{V}{n_1}} ,
\end{align}
where the second expectation is taken with respect to $\sW_1$ which depends on $S_1$, and $c$ is a constant.
Combining \eqref{eq:gen_2stage} and \eqref{eq:emp_2stage} and setting $n_1 = n_2 = n/2$, we have for some constant $c$,
\begin{align}
\E[L_\mu(W)] \le \inf_{w\in\sW}L_\mu(w) + c\sqrt{\frac{V\log n}{n}} .
\end{align}
From an information-theoretic point of view, the above two-stage algorithm effectively controls the conditional mutual information $I(S_2;W|S_1)$ by extracting an empirical cover of $\sW$ using $S_1$, while maintaining a small empirical risk using $S_2$.

\subsection{Gibbs algorithm}
%The decomposition of the expected population risk in \eqref{eq:true=emp+gen} suggests that a good learning algorithm should output a hypothesis with both small empirical risk and small generalization error.
As Theorem~\ref{th:gen_err_mi} shows that the generalization error can be upper-bounded in terms of $I(S;W)$, it is natural to consider an algorithm that minimizes the empirical risk regularized by $I(S;W)$:
\begin{align}\label{eq:mi_reg_ERM}
P^{\star}_{W|S} &= \arginf_{P_{W|S}} \left(\E[L_S(W)] + \frac{1}{\beta} I(S;W) \right) , %\qquad\text{for all $\mu$} 
\end{align}
where $\beta>0$ is a parameter that balances fitting and generalization. 
To deal with the issue that $\mu$ is unknown to the learning algorithm, we can relax the above optimization problem by replacing $I(S;W)$ with an upper bound 
$
D(P_{W|S} \| Q | P_S) = I(S;W) + D(P_W \| Q) ,
$
where $Q$ is an arbitrary distribution on $\sW$ and $D(P_{W|S} \| Q | P_S) = \int_{\sZ^n}D(P_{W|S=s} \| Q)\mu^{\otimes n}({\rm d}s)$, so that the solution of the relaxed optimization problem does not depend on $\mu$. It turns out that the well-known Gibbs algorithm solves the relaxed optimization problem.
\begin{theorem}[proved in Appendix~\ref{appd:Gibbs_alg_opt}]\label{th:Gibbs_alg_opt}
The solution to the optimization problem
\begin{align}
P^*_{W|S} &= \arginf_{P_{W|S}} \left( \E[L_S(W)] + \frac{1}{\beta}D(P_{W|S} \| Q | P_S) \right) \label{eq:Dpq_reg_ERM_PS|W} 
\end{align}
is the Gibbs algorithm, which satisfies
\begin{align}\label{eq:Gibbs_alg}
P^*_{W|S=s}({\rm d}w) = \frac{e^{-\beta L_{s}(w)} Q({\rm d}w)}{\E_Q [e^{-\beta L_s(W)}]}  \qquad \text{for each $s\in\sZ^n$.}
\end{align}
\end{theorem}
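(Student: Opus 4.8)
The plan is to recognize \eqref{eq:Dpq_reg_ERM_PS|W} as a convex optimization problem over the choice of Markov kernel $P_{W|S}$, and to exploit the fact that, for a fixed input distribution $P_S = \mu^{\otimes n}$, the objective decouples across the conditioning value $s$. Concretely, since
$$
\E[L_S(W)] = \int_{\sZ^n}\Big(\int_\sW L_s(w)\,P_{W|S=s}(\intd w)\Big)\mu^{\otimes n}(\intd s)
$$
and $D(P_{W|S}\|Q\,|\,P_S) = \int_{\sZ^n} D(P_{W|S=s}\|Q)\,\mu^{\otimes n}(\intd s)$, the functional to be minimized is an average over $s$ of the per-sample objective $J_s(P) \deq \int_\sW L_s(w)\,P(\intd w) + \frac{1}{\beta}D(P\|Q)$, where $P$ ranges over probability measures on $\sW$ absolutely continuous with respect to $Q$. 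Hence it suffices to show that for each fixed $s$, the minimizer of $J_s$ over $P$ is the Gibbs measure in \eqref{eq:Gibbs_alg}; assembling these minimizers into a kernel gives the claimed $P^*_{W|S}$, and one should remark that measurability of $s\mapsto P^*_{W|S=s}$ is immediate from the explicit formula.

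For the per-sample problem, the key step is the standard Gibbs variational identity. Fix $s$ and let $P^*_{s}(\intd w) = e^{-\beta L_s(w)}Q(\intd w)/\E_Q[e^{-\beta L_s(W)}]$, assuming the normalizing constant $\E_Q[e^{-\beta L_s(W)}]$ is finite and positive (which holds, e.g., when $\ell$ is bounded, and more generally under the subgaussian hypothesis used elsewhere in the paper). For any $P\ll Q$, write $D(P\|Q) = D(P\|P^*_s) + \int_\sW \log\frac{\intd P^*_s}{\intd Q}\,\intd P$. Since $\log\frac{\intd P^*_s}{\intd Q}(w) = -\beta L_s(w) - \log\E_Q[e^{-\beta L_s(W)}]$, substituting gives
$$
J_s(P) = \int_\sW L_s(w)\,P(\intd w) + \frac{1}{\beta}\Big(D(P\|P^*_s) - \beta\!\int_\sW L_s(w)\,P(\intd w) - \log\E_Q[e^{-\beta L_s(W)}]\Big) = \frac{1}{\beta}D(P\|P^*_s) - \frac{1}{\beta}\log\E_Q[e^{-\beta L_s(W)}].
$$
The second term is constant in $P$, and $D(P\|P^*_s)\ge 0$ with equality iff $P = P^*_s$; therefore $J_s$ is uniquely minimized by $P^*_s$. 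Taking the average over $s\sim\mu^{\otimes n}$ shows the kernel $s\mapsto P^*_s$ attains the infimum in \eqref{eq:Dpq_reg_ERM_PS|W}, which is \eqref{eq:Gibbs_alg}.

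The main obstacle is not the algebra — the completion-of-the-square identity for relative entropy is routine — but rather the integrability and well-posedness bookkeeping: one must ensure the log-partition function $\E_Q[e^{-\beta L_s(W)}]$ is finite so that $P^*_s$ is a genuine probability measure, handle the case where some candidate $P$ has $D(P\|Q) = +\infty$ (for which the inequality $J_s(P)\ge J_s(P^*_s)$ holds trivially), and confirm that restricting attention to $P\ll Q$ loses nothing since any $P$ with $D(P\|Q)=\infty$ cannot be optimal. Under the subgaussian assumption on $\ell(w,Z)$ invoked throughout Section~\ref{sec:ub_gen_mi_general}, $L_s(w)$ is real-valued and these finiteness conditions are easily checked, so the argument goes through cleanly; I would state the finiteness of the partition function as a standing assumption and then present the variational identity as the one-line crux of the proof.
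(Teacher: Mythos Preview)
Your proposal is correct and follows essentially the same route as the paper: both decouple the objective over $s$ via $D(P_{W|S}\|Q\,|\,P_S)=\int D(P_{W|S=s}\|Q)\,\mu^{\otimes n}(\intd s)$ and then solve the per-sample problem. The paper's proof simply labels the per-sample minimization a ``simple convex optimization problem'' and cites \cite{Zhang_it_est06} for its solution, whereas you spell out the standard Gibbs variational (completion-of-the-square) identity $J_s(P)=\frac{1}{\beta}D(P\|P^*_s)-\frac{1}{\beta}\log\E_Q[e^{-\beta L_s(W)}]$; this is more self-contained but not a different approach.
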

%The Gibbs algorithm can thus be interpreted as a way to stabilize the ERM algorithm by controlling the input-output mutual information. The parameter $\beta>0$ controls how well the Gibbs algorithm approximates the ERM algorithm.
We would not have been able to arrive at the Gibbs algorithm had we used $I(\Lambda_{\sW}(S);W)$ as the regularization term instead of $I(S;W)$ in \eqref{eq:mi_reg_ERM}, even if we upper-bound $I(\Lambda_\sW(S))$ by $D(P_{W|\Lambda_\sW(S)} \| Q | P_{\Lambda_\sW(S)})$.
Using the fact that the Gibbs algorithm is ($2\beta/n$,$0$)-differentially private when $\ell\in[0,1]$ \cite{exp_MT07} and the group property of differential privacy \cite{Dwo_dp_book14}, we can upper-bound the input-output mutual information of the Gibbs algorithm as $I(S;W)\le 2\beta$. Then from Theorem~\ref{th:gen_err_mi}, we know that for $\ell\in[0,1]$, 
%\begin{align}\label{eq:Gibbs_gen}
$\big| {\rm gen}(\mu,P^*_{W|S}) \big| \le \sqrt{{\beta}/{n}} . $
%\end{align}
%The mutual information $I(\Lambda_\sW(S);W)$, on the other hand, is not as straightforward to evaluate as $I(S;W)$.
Using Hoeffding's lemma, a tighter upper bound on the expected generalization error for the Gibbs algorithm is obtained in \cite{stability_ITW16}, which states that if $\ell\in[0,1]$, 
\begin{align}\label{eq:Gibbs_gen_ITW16}
\big| {\rm gen}(\mu,P^*_{W|S}) \big| \le \frac{\beta}{2n} . 
\end{align}
With the guarantee on the generalization error, we can analyze the population risk of the Gibbs algorithm.
We first present a result for countable hypothesis spaces.
\begin{corollary}[proved in Appendix~\ref{appd:Gibbs_risk_cnt}]\label{co:Gibbs_risk_cnt}
	Suppose $\sW$ is countable.
	Let $W$ denote the output of the Gibbs algorithm applied on dataset $S$, and let $w_{\rm o}$ denote the hypothesis that achieves the minimum population risk among $\sW$. For $\ell\in[0,1]$, the population risk of $W$ satisfies
	\begin{align}\label{eq:Gibbs_risk_cnt}
	\E[L_\mu(W)] \le \inf_{w\in\sW} L_\mu(w) + \frac{1}{\beta}\log\frac{1}{Q(w_{\rm o})} + \frac{\beta}{2{n}} .
	\end{align}
\end{corollary}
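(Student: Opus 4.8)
The plan is to use the elementary decomposition $\E[L_\mu(W)] = \E[L_S(W)] + {\rm gen}(\mu,P^*_{W|S})$ from \eqref{eq:true=emp+gen} and bound the two pieces separately. The generalization term is already under control: since $\ell\in[0,1]$, the bound \eqref{eq:Gibbs_gen_ITW16} for the Gibbs algorithm gives ${\rm gen}(\mu,P^*_{W|S}) \le |{\rm gen}(\mu,P^*_{W|S})| \le \beta/(2n)$. So the work is in bounding the expected empirical risk $\E[L_S(W)]$ of the Gibbs output, and for this I would exploit the variational characterization of Theorem~\ref{th:Gibbs_alg_opt}.

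Concretely, by Theorem~\ref{th:Gibbs_alg_opt} the kernel $P^*_{W|S}$ minimizes $P_{W|S}\mapsto \E[L_S(W)] + \tfrac1\beta D(P_{W|S}\|Q\,|\,P_S)$ over all Markov kernels. Evaluating this objective at the constant (deterministic) kernel that outputs $w_{\rm o}$ irrespective of $S$, I get the comparison inequality
\[
\E[L_S(W)] + \tfrac1\beta D(P^*_{W|S}\|Q\,|\,P_S) \;\le\; \E[L_S(w_{\rm o})] + \tfrac1\beta D(\delta_{w_{\rm o}}\|Q),
\]
where $\delta_{w_{\rm o}}$ is the point mass at $w_{\rm o}$. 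Here $D(\delta_{w_{\rm o}}\|Q\,|\,P_S) = D(\delta_{w_{\rm o}}\|Q) = \log\tfrac1{Q(w_{\rm o})}$, using that $\sW$ is countable and that the inner divergence does not depend on $s$; and $\E[L_S(w_{\rm o})] = L_\mu(w_{\rm o})$ because $L_S(w_{\rm o})$ is an unbiased estimator of $L_\mu(w_{\rm o})$ under $\mu^{\otimes n}$. Dropping the nonnegative term $\tfrac1\beta D(P^*_{W|S}\|Q\,|\,P_S)$ on the left yields $\E[L_S(W)] \le L_\mu(w_{\rm o}) + \tfrac1\beta\log\tfrac1{Q(w_{\rm o})}$. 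Adding the generalization bound and using $L_\mu(w_{\rm o}) = \inf_{w\in\sW}L_\mu(w)$ gives \eqref{eq:Gibbs_risk_cnt}.

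I do not expect a genuine obstacle here; the only points needing care are bookkeeping. One must note that the deterministic kernel $\delta_{w_{\rm o}}$ is an admissible competitor in the infimum of Theorem~\ref{th:Gibbs_alg_opt} (it is, since the infimum ranges over all $P_{W|S}$); that the bound is vacuous unless $Q(w_{\rm o})>0$ (if $Q(w_{\rm o})=0$ the right-hand side is $+\infty$); and that the minimizer $w_{\rm o}$ is assumed to exist — if the infimum over the countable $\sW$ were not attained, one would instead compare against an $\eta$-near-minimizer $w$ and let $\eta\to 0$, paying the $\log\tfrac1{Q(w)}$ term for that $w$. The genuine quantitative input is just the generalization guarantee \eqref{eq:Gibbs_gen_ITW16}; the rest is the one-line variational comparison.
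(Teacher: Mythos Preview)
Your proposal is correct and matches the paper's proof essentially line for line: the paper also drops the nonnegative term $\tfrac1\beta D(P^*_{W|S}\|Q\,|\,P_S)$, compares against the constant competitor $\delta_{w_{\rm o}}$ in the variational problem of Theorem~\ref{th:Gibbs_alg_opt}, uses $\E[L_S(w_{\rm o})]=L_\mu(w_{\rm o})$ and $D(\delta_{w_{\rm o}}\|Q)=-\log Q(w_{\rm o})$, and then adds the generalization bound \eqref{eq:Gibbs_gen_ITW16}. Your additional remarks about $Q(w_{\rm o})>0$ and the possible non-attainment of the infimum are reasonable caveats not spelled out in the paper.
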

The distribution $Q$ in the Gibbs algorithm can be used to express our preference, or our prior knowledge of the population risks, of the hypotheses in $\sW$, in a way that a higher probability under $Q$ is assigned to a hypothesis that we prefer.
For example, we can order the hypotheses according to our prior knowledge of their population risks, and set 
$
Q(w_i) = {6}/{\pi^2 i^2}
$
for the $i$th hypothesis in the order, then, setting $\beta = \sqrt{n}$, \eqref{eq:Gibbs_risk_cnt} becomes
\begin{align}
%\E[L_\mu(W)] \le \inf_{w\in\sW} L_\mu(w) + \frac{1}{\sqrt{n}}\left(2\log i_{\rm o} + \log\frac{\pi^2}{6} + \frac{1}{2}\right)  ,
\E[L_\mu(W)] \le \inf_{w\in\sW} L_\mu(w) + \frac{2\log i_{\rm o} + 1}{\sqrt{n}}  ,
\end{align}
where $i_{\rm o}$ is the index of $w_{\rm o}$.
It means that a better prior knowledge on the population risks leads to a smaller sample complexity to achieve a certain expected excess risk.
As another example, if $|\sW| = k$ and we have no preference on any hypothesis, then taking $Q$ as the uniform distribution on $\sW$ and setting $\beta = 2\sqrt{n\log k}$, \eqref{eq:Gibbs_risk_cnt} becomes
$
\E[L_\mu(W)] \le \inf_{w\in\sW} L_\mu(w) + \sqrt{(1/n){\log k}} .
$

For uncountable hypothesis spaces, we can do a similar analysis for the population risk under a Lipschitz assumption on the loss function.
\begin{corollary}[proved in Appendix~\ref{appd:Gibbs_risk_uncnt}]\label{co:Gibbs_risk_uncnt}
	Suppose $\sW = \R^d$. Let $w_{\rm o}$ be the hypothesis that achieves the minimum population risk among $\sW$.
	Suppose $\ell\in[0,1]$ and $\ell(\cdot,z)$ is $\rho$-Lipschitz for all $z\in\sZ$.
	Let $W$ denote the output of the Gibbs algorithm applied on dataset $S$. The population risk of $W$ satisfies
	\begin{align}\label{eq:Gibbs_risk_uncnt_Lip}
	\E[L_\mu(W)] \le  \inf_{w\in\sW}L_\mu(w) + \frac{\beta}{2n} + \inf_{a>0} \left(a\rho\sqrt{d} + \frac{1}{\beta} D\big({\mathcal N}(w_{\rm o},a^2 {\mathbf I}_d) \| Q \big)   \right)  .
	\end{align}
\end{corollary}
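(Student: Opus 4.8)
The plan is to imitate the argument behind Corollary~\ref{co:Gibbs_risk_cnt}, but since a point mass at $w_{\rm o}$ would give an infinite relative entropy against a continuous $Q$, I would instead compare against a Gaussian bump centered at $w_{\rm o}$ whose width $a$ is optimized at the end. First I would invoke the generalization bound \eqref{eq:Gibbs_gen_ITW16}: because $\ell\in[0,1]$, the Gibbs output $W$ satisfies $\E[L_\mu(W)] \le \E[L_S(W)] + \frac{\beta}{2n}$, so it remains to bound the expected empirical risk $\E[L_S(W)]$ of the Gibbs algorithm.

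For that I would use the variational characterization from Theorem~\ref{th:Gibbs_alg_opt}: $P^*_{W|S}$ attains the infimum of $\E[L_S(W)] + \frac{1}{\beta}D(P_{W|S}\|Q\,|\,P_S)$ over all channels, so for any competing kernel $R$ that does not depend on the data, plugging $R$ into the objective and dropping the nonnegative term $\frac{1}{\beta}D(P^*_{W|S}\|Q\,|\,P_S)$ gives
\[
\E_{P^*}[L_S(W)] \;\le\; \E_{W\sim R}\big[\E_S[L_S(W)]\big] + \frac{1}{\beta}D(R\|Q) \;=\; \E_{W\sim R}[L_\mu(W)] + \frac{1}{\beta}D(R\|Q),
\]
where the last equality uses $\E_S[L_S(w)] = L_\mu(w)$ and the fact that a data-independent $R$ makes $W$ independent of $S$, so that $D(R\|Q\,|\,P_S)$ collapses to the unconditional $D(R\|Q)$.

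Next I would take $R = \mathcal{N}(w_{\rm o},a^2\mathbf{I}_d)$ for an arbitrary $a>0$. Since $\ell(\cdot,z)$ is $\rho$-Lipschitz for every $z$, the population risk $L_\mu(\cdot) = \E[\ell(\cdot,Z)]$ is $\rho$-Lipschitz as well, hence $L_\mu(W) \le L_\mu(w_{\rm o}) + \rho\|W-w_{\rm o}\|$; taking expectation under $R$, applying Jensen's inequality and using $\E_{W\sim R}\|W-w_{\rm o}\|^2 = a^2 d$, gives $\E_{W\sim R}[L_\mu(W)] \le L_\mu(w_{\rm o}) + a\rho\sqrt{d} = \inf_{w\in\sW}L_\mu(w) + a\rho\sqrt{d}$. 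Chaining this with the previous display and with $\E[L_\mu(W)] \le \E[L_S(W)] + \frac{\beta}{2n}$, and then taking the infimum over $a>0$, yields \eqref{eq:Gibbs_risk_uncnt_Lip}.

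I do not expect a serious obstacle here: once the Gaussian comparison kernel is chosen, the rest is bookkeeping. The only points that need a line of care are (i) that a data-independent kernel turns $D(R\|Q\,|\,P_S)$ into $D(R\|Q)$ and turns $\E[L_S(W)]$ into $\E_{W\sim R}[L_\mu(W)]$ via Fubini, and (ii) that $L_\mu$ inherits the Lipschitz constant $\rho$ from the loss. If $D(\mathcal{N}(w_{\rm o},a^2\mathbf{I}_d)\|Q)=\infty$ for every $a>0$ the bound is vacuously true, so no separate case analysis is needed.
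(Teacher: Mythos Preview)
Your proposal is correct and follows essentially the same route as the paper's proof: both invoke the variational optimality of the Gibbs kernel against the data-independent comparator $\mathcal{N}(w_{\rm o},a^2\mathbf{I}_d)$, drop the nonnegative $D(P^*_{W|S}\|Q\,|\,P_S)$ term, convert $\E_S[L_S(w)]$ to $L_\mu(w)$ by Fubini, use the $\rho$-Lipschitz property of $L_\mu$ together with $\E\|W-w_{\rm o}\|\le a\sqrt{d}$, and then add the generalization term $\beta/(2n)$ from \eqref{eq:Gibbs_gen_ITW16}. The only cosmetic difference is the order in which the empirical-risk bound and the generalization bound are applied.
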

Again, we can use the distribution $Q$ to express our preference of the hypotheses in $\sW$.
For example, we can choose $Q =  {\mathcal N}(w_{Q},b^2 {\mathbf I}_d)$ with $b = n^{-1/4}d^{-1/4}\rho^{-1/2}$ and choose $\beta = n^{3/4}d^{1/4}\rho^{1/2}$.
Then, setting $a = b$ in \eqref{eq:Gibbs_risk_uncnt_Lip}, we have
\begin{align}\label{eq:Gibbs_risk_uncnt_QGauss}
\E[L_\mu(W)] \le  \inf_{w\in\sW}L_\mu(w) + \frac{d^{1/4}\rho^{1/2}}{2n^{1/4}}\left(\|w_Q - w_{\rm o}\|^2 + 3 \right) .
\end{align}
This result essentially has no restriction on $\sW$, which could be unbounded, and only requires the Lipschitz condition on $\ell(\cdot,z)$, which could be non-convex. The sample complexity decreases with a better prior knowledge of the optimal hypothesis.
%, namely, when $w_Q$ is closer to $w_{\rm o}$ in Euclidean distance.

\subsection{Noisy empirical risk minimization}
Another algorithm with controlled input-output mutual information is the noisy empirical risk minimization algorithm, where independent noise $N_w$, $w\in\sW$, is added to the empirical risk of each hypothesis, and the algorithm outputs a hypothesis that minimizes the noisy empirical risks:
\begin{align}\label{eq:noisyERM_alg}
 W = \argmin_{w\in\sW}\big( L_S(w) + N_w \big) .
\end{align}
%This algorithm can be viewed as another way of regularizing the ERM algorithm. 
Similar to the Gibbs algorithm, we can express our preference of the hypotheses by controlling the amount of noise added to each hypothesis, such that our preferred hypotheses will be more likely to be selected when they have similar empirical risks as other hypotheses.
The following result formalizes this idea.
\begin{corollary}[proved in Appendix~\ref{appd:noisyERM_exp}]\label{co:noisyERM_exp}
	Suppose $\sW$ is countable and is indexed such that a hypothesis with a lower index is preferred over one with a higher index. Also suppose $\ell\in[0,1]$. For the noisy ERM algorithm in \eqref{eq:noisyERM_alg}, choosing $N_i$ to be an exponential random variable with mean $b_i$, we have
	\begin{align}\label{eq:R_noisyERM_exp}
	\E[L_\mu(W)] \le \min_{i} L_\mu(w_i) + b_{i_{\rm o}} + \sqrt{\frac{1}{2n}\sum_{i=1}^{\infty} \frac{L_\mu(w_i)}{b_i}} - \left(\sum_{i=1}^{\infty}\frac{1}{b_i}\right)^{-1} ,
	\end{align}
	where $i_{\rm o} = \argmin_{i} L_\mu(w_i)$.
	In particular, choosing $b_i = {i^{1.1}}/{n^{1/3}}$, we have
	\begin{align}\label{eq:R_noisyERM_exp_app}
	\E[L_\mu(W)] \le \min_{i} L_\mu(w_i) + \frac{i_{\rm o}^{1.1}+3}{n^{1/3}} .
	\end{align}
\end{corollary}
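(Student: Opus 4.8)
The plan is to split $\E[L_\mu(W)]$ via the decomposition \eqref{eq:true=emp+gen}, namely $\E[L_\mu(W)] = \E[L_S(W)] + {\rm gen}(\mu,P_{W|S})$, bound the two terms separately using the structure of \eqref{eq:noisyERM_alg}, and then substitute the explicit noise levels $b_i = i^{1.1}/n^{1/3}$ at the end. Throughout write $\lambda_i = 1/b_i$, so $N_i$ has rate $\lambda_i$; I assume, as is implicit in the statement, that $\sum_i 1/b_i < \infty$, which also guarantees that the minimum in \eqref{eq:noisyERM_alg} is attained almost surely and that $W$ is well defined.

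For the empirical-risk term, note that by definition $L_S(W) + N_W = \min_i\big(L_S(w_i) + N_i\big) \le L_S(w_{i_{\rm o}}) + N_{i_{\rm o}}$. Since $w_{i_{\rm o}}$ is a fixed hypothesis and $N_{i_{\rm o}}$ is independent of $S$, taking expectations gives $\E[L_S(W)] + \E[N_W] \le L_\mu(w_{i_{\rm o}}) + b_{i_{\rm o}} = \min_i L_\mu(w_i) + b_{i_{\rm o}}$. The improvement comes from a lower bound on $\E[N_W]$: since $N_W$ equals one of the $N_i$, we have $N_W \ge \min_i N_i$, and the minimum of independent exponentials with rates $\lambda_i$ is exponential with rate $\sum_i \lambda_i$, so $\E[N_W] \ge (\sum_i 1/b_i)^{-1}$. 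Hence $\E[L_S(W)] \le \min_i L_\mu(w_i) + b_{i_{\rm o}} - (\sum_i 1/b_i)^{-1}$.

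For the generalization term, $\ell \in [0,1]$ makes $\ell(w,Z)$ $\tfrac{1}{2}$-subgaussian, so Theorem~\ref{th:gen_err_mi} gives $|{\rm gen}(\mu,P_{W|S})| \le \sqrt{\tfrac{1}{2n}\,I(S;W)}$, and it remains to establish $I(S;W) \le \sum_i L_\mu(w_i)/b_i$. I would prove this by comparing the channel $P_{W|S=s}$ against the reference $Q$ that the algorithm produces on a dataset with all empirical risks equal to zero, namely $Q(w_i) = \PP[\argmin_k N_k = w_i] = \lambda_i / \sum_k \lambda_k$, and using $I(S;W) \le \E_S\, D(P_{W|S}\,\|\,Q)$. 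Fix $s$ and put $a_i = L_s(w_i) \ge 0$. Conditioning on $N_i = t$, the event $\{W = w_i\}$ requires $N_k \ge a_i + t - a_k$ for every $k \ne i$, so $P_{W|S=s}(w_i) = \int_0^\infty \lambda_i e^{-\lambda_i t}\prod_{k\ne i} e^{-\lambda_k (a_i + t - a_k)^+}\,{\rm d}t$; dropping the positive parts (which only enlarges the integrand) yields $P_{W|S=s}(w_i) \le \frac{\lambda_i}{\sum_k \lambda_k}\, e^{-\sum_{k\ne i}\lambda_k(a_i - a_k)} = Q(w_i)\, e^{\sum_{k\ne i}\lambda_k(a_k - a_i)} \le Q(w_i)\, e^{\sum_k \lambda_k a_k}$, using $a_i \ge 0$. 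Hence $\log\frac{P_{W|S=s}(w_i)}{Q(w_i)} \le \sum_k \lambda_k L_s(w_k)$ for every $i$; since this bound is nonnegative, multiplying by $P_{W|S=s}(w_i)$ and summing over $i$ gives $D(P_{W|S=s}\,\|\,Q) \le \sum_k \lambda_k L_s(w_k)$ (terms with ratio below $1$ only help). Taking $\E_S$ and using $\E[L_S(w_k)] = L_\mu(w_k)$ gives $I(S;W) \le \sum_k L_\mu(w_k)/b_k$; combining with the empirical-risk bound yields \eqref{eq:R_noisyERM_exp}.

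Finally, with $b_i = i^{1.1}/n^{1/3}$ we have $b_{i_{\rm o}} = i_{\rm o}^{1.1}/n^{1/3}$; the term $-(\sum_i 1/b_i)^{-1}$ is negative and can be discarded; and $\sum_i L_\mu(w_i)/b_i \le \sum_i 1/b_i = n^{1/3}\sum_{i\ge 1} i^{-1.1} \le n^{1/3}\big(1 + \int_1^\infty x^{-1.1}\,{\rm d}x\big) = 11\, n^{1/3}$, so $\sqrt{\tfrac{1}{2n}\sum_i L_\mu(w_i)/b_i} \le \sqrt{11/2}\, n^{-1/3} < 3 n^{-1/3}$; adding the three surviving terms gives \eqref{eq:R_noisyERM_exp_app}. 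The main obstacle is precisely the bound $I(S;W) \le \sum_i L_\mu(w_i)/b_i$: one has to hit upon the right reference measure $Q$ and control $P_{W|S=s}(w_i)/Q(w_i)$ uniformly in $i$, which is where the memoryless tail $\PP[N_k > t] = e^{-t/b_k}$ and the nonnegativity of $\ell$ are used; the empirical-risk estimate and the final substitution are routine once the mutual-information bound is in place.
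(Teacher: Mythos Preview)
Your proof is correct, and the empirical-risk bound and the final numerical substitution match the paper's argument essentially line for line. The genuine difference is in how you control $I(S;W)$.

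The paper does not compute $P_{W|S=s}$ directly. Instead it lifts from $W$ to the entire vector of noisy empirical risks: by data processing, $I(S;W) \le I\big((L_S(w_i))_i\,;\,(L_S(w_i)+N_i)_i\big)$; since the ``channel'' is a product of independent additive-noise channels, this mutual information is at most $\sum_i I(L_S(w_i);L_S(w_i)+N_i)$; and each summand is bounded by the capacity of the additive exponential-noise channel under the input mean constraint $\E[L_S(w_i)]=L_\mu(w_i)$, namely $\log\big(1+L_\mu(w_i)/b_i\big)$. Only at the very end does the paper apply $\log(1+x)\le x$ to reach the form in \eqref{eq:R_noisyERM_exp}. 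Your route, by contrast, fixes the explicit reference $Q(w_i)=\lambda_i/\sum_k\lambda_k$ on $\sW$ and bounds $\log\frac{P_{W|S=s}(w_i)}{Q(w_i)}$ pointwise via the memoryless tail of the exponentials; you land directly on $\sum_i L_\mu(w_i)/b_i$ without the logarithmic intermediate.

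What each buys: the paper's argument is modular and slightly sharper (it gives $\sum_i\log(1+L_\mu(w_i)/b_i)$ before the final relaxation), and it would transfer immediately to any noise family whose additive-channel capacity under a mean constraint is known. Your argument is more self-contained---no appeal to channel-capacity formulas---and exhibits a concrete reference law $Q$ on $\sW$ itself, which could be reused in a PAC--Bayes style analysis. One cosmetic point: in your KL step, the bound $\log\frac{P_{W|S=s}(w_i)}{Q(w_i)}\le B$ with $B$ independent of $i$ already implies $D(P_{W|S=s}\|Q)\le B$ by averaging against $P_{W|S=s}$; the remarks about nonnegativity of $B$ and ``terms with ratio below $1$ only help'' are not needed and slightly obscure an otherwise clean step.
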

Without adding noise, the ERM algorithm applied to the above case when $|\sW|=k$ can achieve
%\begin{align}\label{eq:R_ERM_l[01]}
$
\E[L_\mu(W_{\rm ERM})] \le \min_{i\in[k]} L_\mu(w_i) + \sqrt{(1/{2n}){\log k}} .
$
%\end{align}
Compared with \eqref{eq:R_noisyERM_exp_app}, we see that performing noisy ERM may be beneficial when we have high-quality prior knowledge of $w_{\rm o}$ and when $k$ is large. 
\begin{comment}
For the case of adding Gaussian noise, we have the following corollary of Theorem~\ref{th:gen_err_mi}.
\begin{corollary}[proved in Appendix~\ref{appd:gen_noisyERM_gauss}]\label{co:gen_noisyERM_gauss}
Suppose $|\sW| = k$ and $\ell(w,Z)$ is $\sigma$-subgaussian under $\mu$ for all $w\in\sW$. For the noisy ERM algorithm $P_{W|S}$ where $N_w$'s are i.i.d.\ $\mathcal N (0,\sigma_N^2)$, 
\begin{align}\label{eq:gen_noisyERM}
\big| {\rm gen}(\mu,P_{W|S}) \big| \le \frac{\sigma^2\sqrt{k}}{n\sigma_N} \wedge \sqrt{\frac{2\sigma^2\log k}{n}} .
\end{align}
Moreover, choosing $\sigma_N^2 = \sigma^2/4n$, the expected excess risk satisfies
$
R_{\rm excess}(\mu,P_{W|S}) \le {2}\sqrt{\frac{2\sigma^2 \log k}{n}} .
$
\end{corollary}
Instead of adding noise, the empirical risks can be quantized and the hypothesis minimizing the quantized empirical risks can be picked. The generalization error can be analyzed similarly.
\end{comment}

\subsection{Other methods to induce input-output mutual information stability}
In addition to the Gibbs algorithm and the noisy ERM algorithm, many other methods may be used to control the input-output mutual information of the learning algorithm. 
One method is to preprocess the dataset $S$ to obtain $\tilde S$, and then run a learning algorithm on $\tilde S$.
The preprocessing can be adding noise to the data or erasing some of the instances in the dataset, etc.
In any case, we have the Markov chain
$
S - \tilde S - W ,
$
which implies
$
I(S;W) \le \min\big\{ I(S; \tilde S) ,\, I(\tilde S;W) \big\} .
$
Another method is the postprocessing of the output of a learning algorithm. For example, the weights $\tilde W$ generated by a neural network training algorithm can be quantized or perturbed by noise. This gives rise to the Markov chain
$
S - \tilde W - W ,
$
which implies
$
I(S;W) \le \min\big\{ I(\tilde W ; W) ,\, I(S; \tilde W) \big\} .
$
Moreover, strong data processing inequalities \cite{MR_SDPI} may be used to sharpen these upper bounds on $I(S;W)$.
Preprocessing of the dataset and postprocessing of the output hypothesis are among numerous regularization methods used in the field of deep learning \cite[Ch.~7.5]{DL_book2016}. Other regularization methods may also be interpreted as ways to induce the input-output mutual information stability of a learning algorithm, and this would be an interesting direction of future research.

\subsection{Adaptive composition of learning algorithms}
Beyond analyzing the generalization error of individual learning algorithms, examining the input-output mutual information is also useful for analyzing the generalization capability of complex learning algorithms obtained by adaptively composing simple constituent algorithms. Under a $k$-fold adaptive composition, the dataset $S$ is shared by $k$ learning algorithms that are sequentially executed.  
For $j=1,\ldots,k$, the output $W_j$ of the $j$th algorithm may be drawn from a different hypothesis space $\sW_j$ based on $S$ and the outputs $W^{j-1}$ of the previously executed algorithms, according to $P_{W_j|S,W^{j-1}}$.
An example with $k = 2$ is model selection followed by a learning algorithm using the same dataset.
Various boosting techniques in machine learning can also be viewed as instances of adaptive composition. 
From the data processing inequality and the chain rule of mutual information,
\begin{align}\label{eq:mi_adcomp_chain}
I(S;W_k) \le I(S;W^k) = \sum_{j=1}^k I(S;W_j|W^{j-1}) .
\end{align}
If the Markov chain $S - \Lambda_{\sW_j}(S) - W_j$ holds conditional on $W^{j-1}$ for $j=1,\ldots,k$, then the upper bound in \eqref{eq:mi_adcomp_chain} can be sharpened to $\sum_{j=1}^k I(\Lambda_{\sW_j}(S);W_j|W^{j-1})$.
We can thus control the generalization error of the final output by controlling the conditional mutual information at each step of the composition.
This also gives us a way to analyze the generalization error of the composed learning algorithm using the knowledge of local generalization guarantees of the constituent algorithms.

\begin{comment}
Various boosting techniques in machine learning can be viewed as instances of adaptive composition. We may use \eqref{eq:mi_adcomp_chain} to analyze their generalization performance.
Taking binary classification as an example, suppose the first $k$ hypotheses are picked from a base hypothesis space $\sW$, and the final hypothesis is an aggregation of the first $k$ hypotheses and lies in a larger hypothesis space $\overline\sW$.
If, conditionally on $W^{j-1}$, the Markov chain $S - \sA_\sW(S) - W_j$ holds for $j=1,\ldots,k$, and if the Markov chain $S - W^k - W$ holds for the final hypothesis $W$, then as a result of \eqref{eq:mi_adcomp_chain} and Theorem~\ref{th:mi_VC},
\begin{align}
I(S;W) \le I(S;W^k) \le kV(\sW)\log(n+1) ,
\end{align}
and
\begin{align}
\big| {\rm gen}(\mu,P_{W|S}) \big| \le \sqrt{\frac{kV(\sW)\log (n+1)}{2n}} .
\end{align}
The same results hold for the bagging technique, which can be viewed as a special case of the above example where the first $k$ hypotheses are generated non-adaptively based on $S$.
For the above problem, $I(\Lambda_{\overline\sW}(S);W)$ can be upper-bounded by $\sum_{j=1}^k I(\Lambda_{\overline\sW}(S);W_j|W^{j-1})$ as well; however, each term in the summation cannot be directly bounded in terms of the VC dimension of the base hypothesis space. This shows that $I(S;W)$ can be much more handy to use than $I(\Lambda_{\overline\sW}(S);W)$ in analyzing adaptively composed algorithms.
\end{comment}

\section*{Acknowledgement}
We would like to thank Vitaly Feldman and Vivek Bagaria for pointing out errors in the earlier version of this paper. We also would like to thank Peng Guan for helpful discussions.

% Generated by IEEEtran.bst, version: 1.13 (2008/09/30)

\newpage

\appendix

\renewcommand{\theequation}{\Alph{section}.\arabic{equation}}
\renewcommand{\thelemma}{\Alph{section}.\arabic{lemma}}
\setcounter{equation}{0}
\setcounter{page}{1}
\setcounter{lemma}{0}

\section{Proof of Lemma~\ref{lm:abs_diff_KL}}\label{appd:abs_diff_KL}
Just like Russo and Zou \cite{RusZou16}, we exploit the Donsker--Varadhan variational representation of the relative entropy \cite[Corollary~4.15]{Boucheron_etal_concentration_book}: for any two probability measures $\pi,\rho$ on a common measurable space $(\Omega,{\mathcal F})$,
	\begin{align}\label{eq:Donsker_Varadhan}
		D(\pi \| \rho) = \sup_F \left\{ \int_\Omega F\,{\rm d}\pi - \log \int_\Omega e^F {\rm d}\rho\right\},
	\end{align}
where the supremum is over all measurable functions $F : \Omega \to \R$, such that $e^F \in L^1(\rho)$.
From \eqref{eq:Donsker_Varadhan}, we know that for any $\lambda \in \R$,
\begin{align}
D(P_{X,Y} \| P_X \otimes P_Y) &\ge \E[\lambda f(X,Y)] - \log \E\big[e^{\lambda f(\bar X,\bar Y)}\big] \nonumber \\
&\ge \lambda\big(\E[f(X,Y)] - \E[f(\bar X,\bar Y)]\big) - \frac{\lambda^2 \sigma^2}{2} \label{eq:abs_diff_KL} ,
\end{align}
where the second step follows from the subgaussian assumption on $f(\bar X,\bar Y)$:
\begin{align*}%\label{eq:subG_assump_F}
\log \E\big[e^{\lambda(f(\bar X,\bar Y) - \E[f(\bar X,\bar Y)])}\big] 
\le \frac{\lambda^2 \sigma^2}{2} \qquad \forall \lambda \in \R .
\end{align*}
Inequality \eqref{eq:abs_diff_KL} gives a nonnegative parabola in $\lambda$, whose discriminant must be nonpositive, which implies
\begin{align*}
\big| \E[f(X,Y)] - \E[f(\bar X,\bar Y)] \big| \le \sqrt{2\sigma^2 D(P_{X,Y} \| P_X \otimes P_Y)} .
\end{align*}
The result follows by noting that $I(X;Y) = D(P_{X,Y} \| P_X \otimes P_Y)$.

\section{Proof of Theorem~\ref{th:gen_err_hp_mi_gen_ell}}\label{appd:gen_err_hp_mi_gen_ell}

To prove Theorem~\ref{th:gen_err_hp_mi_gen_ell}, we need the following two lemmas.
\begin{lemma}\label{lm:mi_multi_samp}
Consider the parallel execution of $m$ independent copies of $P_{W|S}$ on independent datasets $S_{1},\ldots,S_{m}$: for $t = 1,\ldots,m$, an independent copy of $P_{W|S}$ takes $S_{t}\sim\mu^{\otimes n}$ as input and outputs $W_t$.
Define $S^m \deq (S_{1},\ldots,S_{m})$.
If under $\mu$, $P_{W|S}$ satisfies that $I(\Lambda_\sW(S);W)\le \eps$, then the overall algorithm $P_{W^m|S^m}$ satisfies $
I(\Lambda_\sW(S_1),\ldots,\Lambda_\sW(S_m);W^m) \le m\eps$.
\end{lemma}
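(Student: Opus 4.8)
The plan is to reduce the claim to the additivity of mutual information across independent coordinates, so that essentially no idea beyond the independence of the $m$ copies is needed. First I would record the structural observation that, by construction, the pairs $(S_1,W_1),\ldots,(S_m,W_m)$ are mutually independent and each $(S_t,W_t)$ has the same joint law as $(S,W)$ under $\mu^{\otimes n}\otimes P_{W|S}$. Since $\Lambda_\sW(\cdot)$ acts on each $S_t$ separately, it follows that the pairs $(\Lambda_\sW(S_1),W_1),\ldots,(\Lambda_\sW(S_m),W_m)$ are also mutually independent, with $(\Lambda_\sW(S_t),W_t)\stackrel{d}{=}(\Lambda_\sW(S),W)$ for each $t$.

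Next I would invoke the tensorization of mutual information. Writing $A_t=\Lambda_\sW(S_t)$ and $B_t=W_t$, the joint law factorizes as $P_{A^m,B^m}=\bigotimes_{t=1}^m P_{A_t,B_t}$, and hence so do the marginals, $P_{A^m}=\bigotimes_t P_{A_t}$ and $P_{B^m}=\bigotimes_t P_{B_t}$, so that $P_{A^m}\otimes P_{B^m}=\bigotimes_t(P_{A_t}\otimes P_{B_t})$. By additivity of relative entropy over product measures,
\[
I(A^m;B^m)=D(P_{A^m,B^m} \| P_{A^m}\otimes P_{B^m})=\sum_{t=1}^m D(P_{A_t,B_t} \| P_{A_t}\otimes P_{B_t})=\sum_{t=1}^m I(A_t;B_t).
\]
Applied here, this gives $I(\Lambda_\sW(S_1),\ldots,\Lambda_\sW(S_m);W^m)=\sum_{t=1}^m I(\Lambda_\sW(S_t);W_t)$, and since each summand equals $I(\Lambda_\sW(S);W)\le\eps$ by hypothesis, the sum is at most $m\eps$, which is the claim.

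There is no real obstacle here; the only points that require a little care are the measurability of $\Lambda_\sW(S)$ as a random element of $\R^\sW$ under the product $\sigma$-algebra when $\sW$ is uncountable — so that the mutual informations above are well-defined — and the additivity of relative entropy over product measures, which is standard (see, e.g., \cite{PolWu_IT_lectures}). If one preferred to avoid tensorization, an equivalent route is the chain rule $I(\Lambda^m;W^m)=\sum_{t=1}^m I(\Lambda^m;W_t\mid W^{t-1})$ combined with the independence across $t$ (each conditional mutual information collapsing to $I(\Lambda_\sW(S_t);W_t)$), but the product-measure computation is the cleanest.
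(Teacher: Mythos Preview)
Your argument is correct and essentially the same as the paper's: the paper only notes that the result follows from the independence of the pairs $(S_t,W_t)$ together with the chain rule of mutual information, which is exactly the alternative route you sketch (and is equivalent to the tensorization you give as your main argument). There is nothing to add.
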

\begin{proof}
The proof is based on the independence among $(S_{t},W_t)$, $t = 1,\ldots,m$, and the chain rule of mutual information.
\end{proof}

%\begin{lemma}\label{lm:gen_mi_S_WTR}
%Let $\tilde S \deq (S_{1},\ldots,S_{m})$, where $S_{t}\sim\mu^{\otimes n}$.
%If an algorithm $P_{W,T,R|\tilde S}: \sZ^{m\times n} \to \sW \times [m] \times \{\pm 1\}$ satisfies $I(\tilde S;W,T,R) \le \eps$, 
%and if $\ell(w,Z)$ is $\sigma$-subgaussian for all $w\in\sW$, then
%$$
%\E\big[ R(L_{S_{T}}(W) - L_\mu(W)) \big] \le \sqrt{\frac{2\sigma^2 \eps}{n}} .
%$$
%\end{lemma}
%\begin{proof}
%This result can be proved as a consequence of Lemma~\ref{lm:abs_diff_KL}.
%Let $X = \tilde S$, $Y = (W,T,R)$, and 
%$$
%f(\tilde s,(w,t,r)) = r L_{s_t}(w) .
%$$
%If $\ell(w,Z)$ is $\sigma$-subgaussian under $Z\sim\mu$ for all $w\in\sW$, then 
%$\frac{r}{n}\sum_{i=1}^n \ell(w,Z_{t,i})$ is $\sigma/\sqrt{n}$-subgaussian for all $w\in\sW$, $t\in[m]$ and $r\in\{\pm 1\}$, and hence $f(\bar X, \bar Y)$ is $\sigma/\sqrt{n}$-subgaussian.
%Lemma~\ref{lm:abs_diff_KL} implies that
%$$
%\E[ R(L_{S_{T}}(W)] - \E[ RL_\mu(W) ] \le \sqrt{\frac{2\sigma^2 I(\tilde S ; W,T,R)}{n}} 
%$$
%and proves the claim.
%\end{proof}

\begin{lemma}\label{lm:gen_Lmi_S_WTR}
Let $ S^m \deq (S_{1},\ldots,S_{m})$, where $S_{t}\sim\mu^{\otimes n}$.
If an algorithm $P_{W,T,R| S^m}: \sZ^{m\times n} \to \sW \times [m] \times \{\pm 1\}$ satisfies $I(\Lambda_\sW(S_1),\ldots,\Lambda_\sW(S_m);W,T,R) \le \eps$, 
and if $\ell(w,Z)$ is $\sigma$-subgaussian for all $w\in\sW$, then
$$
\E\big[ R(L_{S_{T}}(W) - L_\mu(W)) \big] \le \sqrt{\frac{2\sigma^2 \eps}{n}} .
$$
\end{lemma}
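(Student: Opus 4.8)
The plan is to obtain this as a direct instance of the decoupling estimate in Lemma~\ref{lm:abs_diff_KL}. I would set
$$X \deq \big(\Lambda_\sW(S_1),\ldots,\Lambda_\sW(S_m)\big), \qquad Y \deq (W,T,R),$$
so that the hypothesis becomes $I(X;Y)\le\eps$, and define $f$ on the product alphabet by $f\big((\lambda_1,\ldots,\lambda_m),(w,t,r)\big) \deq r\big(\lambda_t(w)-L_\mu(w)\big)$, where $\lambda_t(w)$ is the $w$-coordinate of the $t$-th block $\lambda_t$. This is a legitimate function of $(X,Y)$: the term $r\lambda_t(w)$ reads $X$ only through block $T$ and $Y$ through all three coordinates, while $rL_\mu(w)$ depends only on $(w,r)$. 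Evaluated at the actual variables, $f(X,Y)=R\big(L_{S_T}(W)-L_\mu(W)\big)$, so $\E[f(X,Y)]$ is exactly the left-hand side of the claim.

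Next I would check the two ingredients Lemma~\ref{lm:abs_diff_KL} needs. First, the decoupled expectation vanishes: with $\bar X,\bar Y$ independent and $P_{\bar X,\bar Y}=P_X\otimes P_Y$, condition on $\bar Y=(w,t,r)$; by independence the $t$-th block of $\bar X$ has the same law as $\Lambda_\sW(S)$ with $S\sim\mu^{\otimes n}$, so $\E[\bar\Lambda_t(w)]=\E_{S\sim\mu^{\otimes n}}[L_S(w)]=L_\mu(w)$ by unbiasedness of the empirical risk, whence $\E[f(\bar X,\bar Y)\mid\bar Y]=0$ and $\E[f(\bar X,\bar Y)]=0$. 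Second, $f(\bar X,\bar Y)$ is $(\sigma/\sqrt n)$-subgaussian: conditioned on $\bar Y=(w,t,r)$ it equals $r\big(\tfrac1n\sum_{i=1}^n\ell(w,\bar Z_{t,i})-L_\mu(w)\big)$ with i.i.d.\ $\bar Z_{t,i}\sim\mu$, which is a centered average of $n$ independent $\sigma$-subgaussian terms multiplied by the sign $r\in\{\pm1\}$ (harmless, since the subgaussian bound holds for all $\lambda\in\R$), hence $(\sigma/\sqrt n)$-subgaussian; and since every conditional law has mean zero, $\E[e^{\lambda f(\bar X,\bar Y)}]=\E_{\bar Y}\E[e^{\lambda f(\bar X,\bar Y)}\mid\bar Y]\le e^{\lambda^2\sigma^2/2n}$, so the mixture is $(\sigma/\sqrt n)$-subgaussian unconditionally. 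Applying Lemma~\ref{lm:abs_diff_KL} and using $\E[f(\bar X,\bar Y)]=0$ together with $I(X;Y)\le\eps$ then gives $\big|\E[f(X,Y)]\big|\le\sqrt{2\sigma^2\eps/n}$, which is the claim (with an absolute value to spare).

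I do not expect a genuine obstacle; the argument is bookkeeping layered on Lemma~\ref{lm:abs_diff_KL}. The one step worth stating carefully — and the only place the problem structure is used — is the subgaussianity of the $\bar Y$-mixture: it is essential that each conditional law of $f(\bar X,\bar Y)$ given $\bar Y$ is \emph{centered}, which is exactly the content of the unbiasedness computation, so that the mixture inherits the constant $\sigma/\sqrt n$ rather than something worse. A minor secondary point is measurability of $\Lambda_\sW(S)$ as a random element of a function space, but this is already implicit in the earlier use of $I(\Lambda_\sW(S);W)$ and requires nothing new.
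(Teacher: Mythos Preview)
Your proposal is correct and follows essentially the same route as the paper: apply Lemma~\ref{lm:abs_diff_KL} with $X=(\Lambda_\sW(S_t))_{t=1}^m$, $Y=(W,T,R)$, and $f$ reading off $r$ times the $(t,w)$-entry of $X$. Your version is in fact slightly more careful on the one delicate point: by building the centering $-rL_\mu(w)$ into $f$, you guarantee that every conditional law of $f(\bar X,\bar Y)$ given $\bar Y$ has mean zero, which is exactly what makes the mixture $\sigma/\sqrt n$-subgaussian; the paper uses the uncentered $f(x,y)=rL_{s_t}(w)$ and asserts the same subgaussianity without isolating this step.
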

\begin{proof}
The proof is based on Lemma~\ref{lm:abs_diff_KL}.
Let $X = (\Lambda_\sW(S_1),\ldots,\Lambda_\sW(S_m))$, $Y = (W,T,R)$, and 
$$
f\big((\Lambda_\sW(s_1),\ldots,\Lambda_\sW(s_m)),(w,t,r)\big) = r L_{s_t}(w) .
$$
If $\ell(w,Z)$ is $\sigma$-subgaussian under $Z\sim\mu$ for all $w\in\sW$, then 
$\frac{r}{n}\sum_{i=1}^n \ell(w,Z_{t,i})$ is $\sigma/\sqrt{n}$-subgaussian for all $w\in\sW$, $t\in[m]$ and $r\in\{\pm 1\}$, and hence $f(\bar X, \bar Y)$ is $\sigma/\sqrt{n}$-subgaussian.
Lemma~\ref{lm:abs_diff_KL} implies that
$$
\E[ R L_{S_{T}}(W)] - \E[ R L_\mu(W) ] \le \sqrt{\frac{2\sigma^2 I(\Lambda_\sW(S_1),\ldots,\Lambda_\sW(S_m) ; W,T,R)}{n}} 
$$
and proves the claim.
\end{proof}
Note that the upper bound in Lemma~\ref{lm:gen_Lmi_S_WTR} does not depend on $m$.
With these lemmas, we can prove Theorem~\ref{th:gen_err_hp_mi_gen_ell}.

\begin{proof}[Proof of Theorem~\ref{th:gen_err_hp_mi_gen_ell}]
The proof is an adaptation of a ``monitor technique'' proposed by Bassily et al. \cite{AlStDp}.
First, let $P_{W^m| S^m}$ be the parallel execution of $m$ independent copies of $P_{W|S}$:
for $t=1,\ldots,m$, an independent copy of $P_{W|S}$ takes an independent $S_{t}\sim\mu^{\otimes n}$ as input and outputs $W_t$.
Given $S^m$ and $W^m$, let the output of the ``monitor'' be a sample $(W^*,T^*,R^*)$ drawn from $\sW \times [m] \times \{\pm 1\}$ according to
\begin{align}\label{eq:argmax_F}
(T^*,R^*) = \argmax_{t\in[m],\, r\in\{\pm 1\}} r\big( L_{\mu}(W_t) - L_{ S_{t}}(W_t) \big)  \quad\text{and}\quad
W^* = W_{T^*} .
\end{align}
This gives
\begin{align*}
R^* \big( L_{\mu}(W^*) - L_{ S_{T^*}}(W^*) \big) = \max_{t\in[m]} \big| L_{\mu}(W_t) - L_{ S_{t}}(W_t) \big| .
\end{align*}
Taking expectation on both sides, we have
\begin{align}\label{eq:Egen_lb}
\E\big[R^* \big( L_{\mu}(W^*) - L_{ S_{T^*}}(W^*) \big) \big]
&= \E\Big[\max_{t\in[m]} \big| L_{\mu}(W_t) - L_{ S_{t}}(W_t) \big| \Big] .
\end{align}
Note that conditional on $W^m$, the tuple $(W^*,T^*,R^*)$ can take only $2m$ values,
which means that
\begin{align}\label{eq:condMI_ub_log2m}
I(\Lambda_\sW(S_1),\ldots,\Lambda_\sW(S_m) ; W^*,T^*,R^* | W^m) &\le \log(2m).
\end{align}
In addition, since $P_{W|S}$ is assumed to satisfy $I(\Lambda_\sW(S);W)\le \eps$, Lemma~\ref{lm:mi_multi_samp} implies that
\begin{align*}
I(\Lambda_\sW(S_1),\ldots,\Lambda_\sW(S_m) ; W^m) \le m\eps .
\end{align*}
Therefore, by the chain rule of mutual information and the data processing inequality, we have
\begin{align*}
I(\Lambda_\sW(S_1),\ldots,\Lambda_\sW(S_m) ; W^*,T^*,R^*) &\le I(\Lambda_\sW(S_1),\ldots,\Lambda_\sW(S_m) ; W^m, W^*,T^*,R^*) \nonumber \\
&\le m\eps + \log(2m) .
\end{align*}
By Lemma~\ref{lm:gen_Lmi_S_WTR} and the assumption that $\ell(w,Z)$ is $\sigma$-subgaussian,
\begin{align}\label{eq:Egen_ub}
\E\big[R^*\big( L_{ S_{T^*}}(W^*) - L_{\mu}(W^*) \big)\big] \le \sqrt{\frac{2\sigma^2}{n}\big(m\eps + \log(2m)\big)} .
\end{align}
Combining \eqref{eq:Egen_ub} and \eqref{eq:Egen_lb} gives
\begin{align}\label{eq:Emax_ub_noiseless}
\E\Big[\max_{t\in [m]} \big|L_{ S_{t}}(W_t) - L_\mu(W_t)\big|\Big] \le \sqrt{\frac{2\sigma^2}{n}\big(m\eps + \log(2m)\big)} . 
\end{align}

The rest of the proof is by contradiction. 
Choose $m = \lfloor 1/\beta \rfloor$.
Suppose the algorithm $P_{W|S}$ does not satisfy the claimed generalization property, namely,
\begin{align}\label{eq:monitor_contra_cond}
\PP\big[\big|L_{S}(W) - L_\mu(W)\big| > \alpha\big] > \beta .
\end{align}
Then by the independence among the pairs $( S_{t},W_t)$, $t = 1,\ldots,m$, 
\begin{align*}
\PP\Big[\max_{t\in [m]} \big| L_{ S_{t}}(W_t) - L_\mu(W_t)\big| > \alpha \Big] > 1 - (1-\beta)^{\lfloor 1/\beta \rfloor} > \frac{1}{2} .
\end{align*}
Thus
\begin{align}\label{eq:Emax_lb}
\E\Big[\max_{t\in [m]} \big| L_{ S_{t}}(W_t) - L_\mu(W_t)\big|\Big] > \frac{\alpha}{2} .
\end{align}
Combining \eqref{eq:Emax_ub_noiseless} and \eqref{eq:Emax_lb} gives
\begin{align}
\frac{\alpha}{2} < \sqrt{\frac{2\sigma^2}{n}\Big(\frac{\eps}{\beta} + \log\frac{2}{\beta}\Big)} .
\end{align}
The above inequality implies that
\begin{align}
n < \frac{8\sigma^2}{\alpha^2}\left(\frac{\eps}{\beta} + \log\frac{2}{\beta}\right) ,
\end{align}
which contradicts the condition in \eqref{eq:n_size_MI_subG}.
Therefore, under the condition in \eqref{eq:n_size_MI_subG}, the assumption in \eqref{eq:monitor_contra_cond} cannot hold. 
This completes the proof.
\end{proof}

\section{Proof of Theorem~\ref{th:Gibbs_alg_opt}}\label{appd:Gibbs_alg_opt}
To solve the relaxed optimization problem in \eqref{eq:Dpq_reg_ERM_PS|W}, first note that
\begin{align}
& \inf_{P_{W|S}} \left( \E[L_S(W)] + \frac{1}{\beta}D(P_{W|S} \| Q | P_S) \right) \nonumber \\
=& \inf_{P_{W|S}} \int_{\sZ^n}\mu^{\otimes n}({\rm d}s) \left(\E[L_s(W)|S=s] + \frac{1}{\beta}D(P_{W|S=s} \| Q) \right) \nonumber \\
=& \int_{\sZ^n}\mu^{\otimes n}({\rm d}s)  \inf_{P_{W|S=s}}  \left( \E[L_s(W)|S=s] + \frac{1}{\beta}D(P_{W|S=s} \| Q)\right) \nonumber .
\end{align}
It follows that for each $s\in\sZ^n$, the algorithm $P^*_{W|S}$ that minimizes \eqref{eq:Dpq_reg_ERM_PS|W} satisfies
\begin{align}
P^*_{W|S=s} = \arginf_{P_{W|S=s}}  \left( \E[L_s(W)|S=s] + \frac{1}{\beta}D(P_{W|S=s} \| Q) \right) \label{eq:Dpq_reg_ERM}.
\end{align}
This is a simple convex optimization problem.
The solution to \eqref{eq:Dpq_reg_ERM} for each $s\in\sZ^n$ turns out to be the Gibbs algorithm \cite{Zhang_it_est06} as described in \eqref{eq:Gibbs_alg}, which does not depend on $\mu$.

\section{Proof of Corollary~\ref{co:Gibbs_risk_cnt}}\label{appd:Gibbs_risk_cnt}
We can bound the expected empirical risk of the Gibbs algorithm $P^*_{W|S}$ as
\begin{align}
\E[L_S(W)] &\le \E[L_S(W)] + \frac{1}{\beta}D(P^*_{W|S} \| Q | P_S) \\
&\le  \E[L_S(w)] + \frac{1}{\beta}D(\delta_w \| Q) \qquad \text{for all $w\in\sW$},
\end{align}
where $\delta_{w}$ is the point mass at $w$.
The second inequality is due to Theorem~\ref{th:Gibbs_alg_opt}, as $\delta_w$ can be viewed as a learning algorithm that ignores the dataset and always outputs $w$.
Taking $w=w_{\rm o}$, noting that $\E[L_S(w_{\rm o})] = L_\mu(w_{\rm o})$, and combining with the upper bound on the expected generalization error \eqref{eq:Gibbs_gen_ITW16}, we obtain 
\begin{align}\label{eq:Gibbs_risk_general}
\E[L_\mu(W)] &\le \inf_{w\in\sW} L_\mu(w) + \frac{1}{\beta}D(\delta_{w_{\rm o}}\| Q) + \frac{\beta}{2n} .
\end{align}
This leads to \eqref{eq:Gibbs_risk_cnt}, as $D(\delta_{w_{\rm o}}\| Q) = -\log{Q(w_{\rm o})}$ when $\sW$ is countable.

\section{Proof of Corollary~\ref{co:Gibbs_risk_uncnt}}\label{appd:Gibbs_risk_uncnt}
Similar to the proof of Corollary~\ref{co:Gibbs_risk_cnt}, we first bound the expected empirical risk of the Gibbs algorithm $P^*_{W|S}$.
For any $a>0$, ${\mathcal N}(w_{\rm o},a^2 {\mathbf I}_d)$ can be viewed as a learning algorithm that ignores the dataset and always draws a hypothesis from this distribution. The nonnegativity of relative entropy and Theorem~\ref{th:Gibbs_alg_opt} imply that
\begin{align}
\E[L_S(W)] &\le \E[L_S(W)] + \frac{1}{\beta}D(P^*_{W|S} \| Q | P_S) \\
&\le \int_{\sW} \E[L_S(w)] {\mathcal N}(w;w_{\rm o},a^2 {\mathbf I}_d) {\rm d}w  + \frac{1}{\beta}D\big({\mathcal N}(w_{\rm o},a^2 {\mathbf I}_d) \| Q \big) \\
&=  \int_{\sW} L_\mu(w) {\mathcal N}(w; w_{\rm o},a^2 {\mathbf I}_d) {\rm d}w  + \frac{1}{\beta}D\big({\mathcal N}(w_{\rm o},a^2 {\mathbf I}_d) \| Q \big) .
\end{align}
Combining with the upper bound on the expected generalization error \eqref{eq:Gibbs_gen_ITW16}, we obtain 
\begin{align}\label{eq:Gibbs_risk_uncnt}
\E[L_\mu(W)] \le \inf_{a>0} \left( \int_{\sW} L_\mu(w) {\mathcal N}(w; w_{\rm o},a^2 {\mathbf I}_d) {\rm d}w  + \frac{1}{\beta}D\big({\mathcal N}(w_{\rm o},a^2 {\mathbf I}_d) \| Q \big) \right) + \frac{\beta}{2n}  .
\end{align}
Since $\ell(\cdot,z)$ is $\rho$-Lipschitz for all $z\in\sZ$, we have that for any $w\in\sW$,
\begin{align}
| L_\mu(w) - L_\mu(w_{\rm o}) | 
%&= | \E[ \ell(w,Z) - \ell(w_{\rm o},Z) ] | \\
&\le  \E[ |\ell(w,Z) - \ell(w_{\rm o},Z)| ]  
%&= \int_{\sZ} |\ell(w,z) - \ell(w_{\rm o},z)| \mu({\rm d}z) \\
%&\le \int_{\sZ} \rho \|w-w_{\rm o}\| \mu({\rm d}z) \\
\le \rho  \|w-w_{\rm o}\| .
\end{align}
Then
\begin{align}
\int_{\sW} L_\mu(w) {\mathcal N}(w; w_{\rm o},a^2 {\mathbf I}_d) {\rm d}w %&= \int_{\sW_{\rm o}(r)} ( L_\mu(w_{\rm o}) + L_\mu(w) - L_\mu(w_{\rm o}) ) {\rm d}w \\
&\le \int_{\sW} \big( L_\mu(w_{\rm o}) + \rho \|w-w_{\rm o}\| \big) {\mathcal N}(w; w_{\rm o},a^2 {\mathbf I}_d) {\rm d}w \\
&\le L_\mu(w_{\rm o}) + \rho a \sqrt{d} .
\end{align}
Substituting this into \eqref{eq:Gibbs_risk_uncnt}, we obtain \eqref{eq:Gibbs_risk_uncnt_Lip}.

\section{Proof of Corollary~\ref{co:noisyERM_exp}}\label{appd:noisyERM_exp}
We prove the result assuming $|\sW|=k$. When $\sW$ is countably infinite, the proof carries over by replacing $k$ with $\infty$.

First, we upper-bound the expected generalization error via $I(S;W)$.
We have the following chain of inequalities:
\begin{align}
I(S;W) 
&\le I\big((L_S(w_i))_{i\in[k]} ; (L_S(w_i)+N_i)_{i\in[k]}\big) \\
&\le \sum_{i=1}^k I(L_S(w_i) ; L_S(w_i)+N_i) \\
&\le \sum_{i=1}^k \log\left(1 + \frac{\E[L_S(w_i)]}{b_i}\right) \\
&= \sum_{i=1}^k \log\left(1 + \frac{L_\mu(w_i)}{b_i}\right) , \label{eq:mi_ub_expERM}
\end{align}
where we have used the data processing inequality for mutual information; the fact that for product channels, the mutual information between the overall input and output is upper-bounded by the sum of the input-output mutual information of individual channels \cite{PolWu_IT_lectures}; the formula for the capacity of the  additive exponential noise channel under an input mean constraint \cite{Verdu_exp96}; and the fact that $\E[L_S(w_i)] = L_\mu(w_i)$.
The assumption that $\ell$ takes values in $[0,1]$ implies that $\ell(w,Z)$ is $1/2$-subgaussian for all $w\in\sW$, and as a consequence of \eqref{eq:mi_ub_expERM},
\begin{align}\label{eq:gen_ub_expERM}
{\rm gen}(\mu,P_{W|S}) \le \sqrt{\frac{1}{2n}\sum_{i=1}^{k} \log\left(1+\frac{L_\mu(w_i)}{b_i}\right)} .
\end{align}
Then, we upper-bound the expected empirical risk.
From the definition of the algorithm, we have that with probability one,
\begin{align}
L_S(W) &= L_S(W) + N_W - N_W \\
&\le L_S(w_{i_{\rm o}}) + N_{i_{\rm o}} - N_W \\
&\le L_S(w_{i_{\rm o}}) + N_{i_{\rm o}} - \min\{N_i,i\in[k]\} .
\end{align}
Taking expectation on both sides, we get
\begin{align}\label{eq:emp_ub_expERM}
\E[L_S(W)] &\le L_\mu(w_{i_{\rm o}}) + b_{i_{\rm o}} - \left(\sum_{i=1}^{k}\frac{1}{b_i}\right)^{-1}.
\end{align}
Combining \eqref{eq:gen_ub_expERM} and \eqref{eq:emp_ub_expERM}, we have
\begin{align}
\E[L_\mu(W)] \le \min_{i\in[k]} L_\mu(w_i) + \sqrt{\frac{1}{2n}\sum_{i=1}^{k} \log\left(1+\frac{L_\mu(w_i)}{b_i}\right)} + b_{i_{\rm o}} - \left(\sum_{i=1}^{k}\frac{1}{b_i}\right)^{-1} ,
\end{align}
which leads to \eqref{eq:R_noisyERM_exp} with the fact that $\log(1+x) \le x$.

When $b_i = {i^{1.1}}/{n^{1/3}}$, using the fact that
\begin{align}
\sum_{i=1}^{k} \frac{1}{i^{1.1}} \le 11 - {10}{k^{-1/10}} 
\end{align}
and upper-bounding $L_\mu(w_i)$'s by $1$, we get
\begin{align}
\E[L_\mu(W)] &\le \min_{i\in[k]} L_\mu(w_i) + \frac{1}{n^{1/3}}\left(\sqrt{\frac{1}{2}\left(11 - {10}{k^{-1/10}} \right)} + i_{\rm o}^{1.1} -\frac{1}{11 - 10k^{-1/10}} \right) \\
&\le \min_{i\in[k]} L_\mu(w_i) + \frac{3+i_{\rm o}^{1.1}}{n^{1/3}} ,
\end{align}
which proves \eqref{eq:R_noisyERM_exp_app}.

\end{document}